\documentclass[final,12pt]{arxiv_temp} % Include author names

\usepackage[utf8]{inputenc} % allow utf-8 input
\usepackage[T1]{fontenc}    % use 8-bit T1 fonts
\usepackage{booktabs}       % professional-quality tables
\usepackage{amsfonts}       % blackboard math symbols
\usepackage{nicefrac}       % compact symbols for 1/2, etc.
\usepackage{microtype}      % microtypography
\usepackage{xcolor}         % colors
\usepackage{array}
\usepackage{bbm}
\usepackage{enumitem}
\usepackage{mathtools}
\usepackage{nicematrix}
\usepackage{bm}
\usepackage{tikz}
\usetikzlibrary{positioning}
\usetikzlibrary{bayesnet}
\usetikzlibrary{fit}
\usetikzlibrary{shapes,arrows}
\usetikzlibrary{shapes.multipart}
\usetikzlibrary{arrows.meta}
\usetikzlibrary{fit}
\usepackage{breakcites}
\usepackage{hyperref}
\usepackage[capitalise]{cleveref}
\usepackage{threeparttable}
\usepackage{graphicx}
\usepackage{float}
\makeatletter

\renewcommand*{\@opargbegintheorem}[3]{\trivlist
  \item[\hskip \labelsep{\bfseries #1\ #2}] \textbf{(#3)}\ \itshape}
\makeatother

\def\*#1{\mathbf{#1}}
\def\pa#1{\text{Pa}#1}
\def\nd#1{\text{Nd}#1}

\newcommand{\ind}{\perp\!\!\!\perp}
\newcommand{\nind}{\not\!\perp\!\!\!\perp}

\newcommand{\cG}{{\mathcal{G}}}
\newcommand{\cH}{{\mathcal{H}}}

\newcommand{\cB}{{\mathcal{B}}}

\newcommand{\cW}{{\mathcal{W}}}

\newcommand{\boldeta}{\boldsymbol{\eta}}
\newcommand{\integers}{\mathbb{Z}}
\newcommand{\gts}{\mathcal{G}_{ts}}
\newcommand{\tmax}{\tau_{\max}}
\newcommand{\gw}{\cG_\cW}
\newcommand{\hw}{\cH_\cW}
\newcommand{\sw}{\*S_{\cW}}
\newcommand{\pre}{\text{Pre}}

\newcommand{\proc}[1]{\mathtt{#1}}

\makeatletter
\def\mathcolor#1#{\@mathcolor{#1}}
\def\@mathcolor#1#2#3{%
  \protect\leavevmode
  \begingroup
    \color#1{#2}#3%
  \endgroup
}
\makeatother
\makeatletter

\newtheorem{assumption}{Assumption}

\title[Learning Causal Structure of Time Series using Best Order Score Search]{Learning Causal Structure of Time Series using Best Order Score Search} %\vspace{0.5cm}}
\usepackage{times}

\arxivauthor{%
 \Name{Irene Gema Castillo Mansilla} \Email{irene.castillo@uni-potsdam.de}\\
 \addr University of Potsdam, Potsdam, Germany
 \AND
 \Name{Urmi Ninad} \Email{urmi.ninad@uni-potsdam.de}\\
 \addr Department of Computer Science, University of Potsdam, Potsdam, Germany
}

\begin{document}

\maketitle

\begin{abstract}%
    Causal structure learning from observational data is central to many scientific and policy domains, but the time series setting common to many disciplines poses several challenges due to temporal dependence. In this paper we focus on score-based causal discovery for multivariate time series and introduce TS-BOSS, a time series extension of the recently proposed Best Order Score Search (BOSS) \citep{BOSS}. TS-BOSS performs a permutation-based search over dynamic Bayesian network structures while leveraging grow–shrink trees to cache intermediate score computations, preserving the scalability and strong empirical performance of BOSS in the static setting. We provide theoretical guarantees establishing the soundness of TS-BOSS under suitable assumptions, and we present an intermediate result that extends classical subgraph minimality results for permutation-based methods to the dynamic (time series) setting. %Through extensive ablation studies on synthetic data, we compare TS-BOSS to the state-of-the-art constraint-based method PCMCI+.
    Our experiments on synthetic data show that TS-BOSS is especially effective in high auto-correlation regimes, where it consistently achieves higher adjacency recall at comparable precision than standard constraint-based methods. Overall, TS-BOSS offers a high-performing, scalable approach for time series causal discovery and our results provide a principled bridge for extending sparsity-based, permutation-driven causal learning theory to dynamic settings.
\end{abstract}

\begin{keywords}%
  Causal discovery, score-based methods, time series, BIC score
\end{keywords}

\section{Introduction}
% \begin{itemize}
%     \item Overview CD, constraint and score based, BIC
%     \item time series CD
%     \item BOSS
%     \item Our contribution
% \end{itemize}
Learning the causal structure underlying observational data is fundamental across a wide range of disciplines, from policy-relevant fields such as economics and epidemiology to natural sciences including neuroscience and Earth system science. 
Causal structure learning methods can be broadly categorized into constraint-based and score-based methods. The former make use of conditional independence constraints learned from the data to infer causal relationships, while the latter optimize a consistently-defined score defined over the space of directed acyclic graphs (DAGs). Further methods exist such as restricted structural causal model (SCM) framework which employs assumptions on functional relationships and noise distributions to uncover causal relationships. See \cite{PearlCausality, Spirtes2000CPS, peters2017elements} for details on the different approaches and \cite{Glymour2019Review, squires_review} for a review. 

Often, the observational data take the form of time series, with observations at each time point influenced by past observations. The challenges specific to time series causal structure learning, or time series \emph{causal discovery}, have been discussed at length \citep{runge_inferring_2019, Camps_Valls_phys_reps}. For instance, in the setting of a single multivariate time series, temporal dependence violates the i.i.d.~assumption, thereby limiting the theoretical guarantees of causal discovery methods and statistical procedures more generally. Several methods spanning the different approaches for causal discovery have been proposed for the time series setting \citep{Hyvarinen_varlingam, runge2020discovering, dynotears}, see \cite{Assaad_survey} for a survey. 

In this work, our focus will be the score-based approach to causal discovery, while presenting experimental comparisons to constraint-based approaches. Recently, \emph{best order score search} (BOSS), a score-based method for structure learning, was introduced \citep{BOSS}. BOSS performs a permutation-based search over DAGs and leverages a novel data structure—grow–shrink trees (GSTs)—to cache intermediate computations. BOSS was shown to achieve high accuracy and state-of-the-art performance while maintaining scalability to a large number of nodes. We propose a time series extension of BOSS for causal discovery over multivariate time series data named \textbf{TS-BOSS}. We provide theoretical results for the soundness of TS-BOSS and ablation studies that compare TS-BOSS to the constraint-based method PCMCI+ \citep{runge2020discovering} for time series causal discovery. Interestingly, our studies illustrate that TS-BOSS outperforms PCMCI+ in the high auto-correlation regime, and generally maintains higher adjacency recall at similar precision. This establishes TS-BOSS as a high-performing and scalable time series causal discovery method. We also provide an intermediate result (\cref{thm:window_Verma}) that extends the result of \cite{verma_pearl_networks} (Theorem 2 and Corollary 2) to the time series (i.e.~dynamic bayesian network) setting. This result implies that sparsest permutation-based methods for causal discovery \citep{raskutti_sp}
can be extended to the time series setting under suitable stationarity assumptions. 

The remainder of this paper is organized as follows: In \cref{sec:related} we present the related work, and in \cref{sec:prelim} we provide the general background for the time series causal discovery problem. In \cref{sec:tsboss} we present our algorithm TS-BOSS and in \cref{sec:theory} we provide theoretical results that guarantee soundness of TS-BOSS. In \cref{sec:experiments} we present a simulation study on synthetic data and conclude with a discussion in \cref{sec:outlook}. 

\section{Related work}\label{sec:related}
%GES, SP, GRaSP, BOSS, PCMCI+, DYNOTEARS, NOTEARS, DAGMA, \cite{ghahramani_DBN}, SVAR-GES, \cite{wienobst_discrete}

Greedy Equivalence Search (GES) was an influential early method in score-based causal discovery which leveraged Meek's conjecture to search greedily over equivalence classes of DAGs \citep{Chickering_ges}. 
Causal discovery was presented as a search problem over permutations that yield the sparsest DAG in \cite{raskutti_sp} with the \emph{sparsest permutation} (SP) algorithm. \cite{Solus_GSP, lam_grasp} proposed greedy score-based approaches for the permutation search over DAGs and \cite{BOSS} proposed an efficient algorithm within the same class of methods using Grow-Shrink trees. \cite{Zheng_NOTEARS} algorithm presented the search over space of DAGs as a continuous optimization problem, and \cite{dynotears} extended this work to the dynamic setting. Several other continuous optimization-based methods and perspectives thereon have been proposed since then \citep{Bello_DAGMA, reisach_varsortability, ng_sober_look}. Recently, \cite{wienobst_discrete} proposed a discrete-search based method using a modified grow-shrink procedure that provides significant speed-up as graph size increases.

%For general structure learning in the time series case, \cite{ghahramani_DBN} presents a review of Bayesian model selection theory and practice for dynamic Bayesian networks.
In the dynamic setting, \cite{runge2020discovering} proposed a constraint-based causal discovery algorithm for time series using a modified conditional independence (CI) test to mitigate the effects of ill-calibrated CI tests in the single multivariate time series setting while allowing for contemporaneous edges. In \cite{malinsky_svar_ges}, a hybrid constraint and score-based method for time series causal discovery with latent confounders was introduced, that uses GES as a model selection step. %For general structure learning in the time series case, \cite{ghahramani_DBN} presents a review of Bayesian model selection theory and practice for dynamic Bayesian networks. 

\section{Time Series Causal Discovery}\label{sec:prelim}

\subsection{Preliminaries and Problem Setup}

Let $\*{X}_t = (X_t^1,\dots,X_t^m)$ denote a vector of $m$ random variables
observed at discrete time points $t \in \integers$.
The collection $(\*{X}_t)_{t \in \integers}$ is called a
\emph{multivariate time series}. We assume that dynamical processes that are representable as multivariate time series can be modeled as a \emph{time series SCM} (or ts-SCM):
\begin{equation}\label{eq:tsSCM}
    X^j_t := f^j(\pa(X^j_t), \eta^j_t) \qquad \forall  \ X^j_t \in \*X_t \ \text{ and } \  \forall \ t \in \integers \ ,
\end{equation}
with `$:=$' denoting structural assignments. Here, each $f^j$ denotes the causal mechanism corresponding to $X^j_t$ that models each $X^j_t$ in terms of the parents of $X^j_t$, denoted by $\pa(X^j_t)$, and the random variable $\eta^j_t$ which is the noise associated to $X^j_t$. For each $t \in \integers$, the noises $\boldeta_t = (\eta_t^1, \dots,\eta_t^m)$ are assumed to be jointly independent. Furthermore, for each $i\in [m]$, the noises $\boldeta^i = (\eta_t^i)_{t\in\integers}$ are assumed to be i.i.d.. The time series SCM \cref{eq:tsSCM} entails a \emph{time series graph} with nodes $X^i_t$ for all $i \in [m]$ and for all $t \in \integers$, and directed edges from a node $X^i_t$ to a node $X^j_{t'}$ if and only if $X^i_t \in \pa(X^j_{t'})$, for all $i,j \in [n]$ and $t,t' \in \integers$ such that $(i,t)\neq (j,t')$. The condition $(i,t)\neq (j,t')$ excludes  self-cycles in the time series graph, and we additionally exclude general directed cycles. The resultant graph is referred to as a \emph{time series directed acyclic graph} or a \emph{ts-DAG} \citep{Gerhardus_AoS}. The concept of d-separation of two disjoint sets of vertices given a third extends naturally to ts-DAGs if the sets are finite. Given finite disjoint subsets $\*S_1, \*S_2, \*S_3$ of $(\*{X}_t)_{t \in \integers}$, we denote the statement $\*S_1$ is d-separated of $\*S_2$ given $\*S_3$ in a ts-DAG $\gts$ as $\*S_1 \ind_d \*S_2 \ | \  \*S_3$.

In this work, the focus is causal discovery of the ts-DAG $\gts$, entailed by SCM \ref{eq:tsSCM} underlying the multivariate time series $(\*{X}_t)_{t \in \integers}$. We make the following assumptions relating graphical d-separation  in $\gts$ and (conditional) independence statements of a finite collection of random variables $X^i_t$ to enable causal discovery. Following the notation of \cite{hochsprung_markov}, we denote finite disjoint subsets $I_1, I_2, I_3 \subset [m]\times \integers$ and corresponding finite disjoint subsets $\*S_m = \{X^i_t \ : \ (i,t) \in I_m\}$ for $m = \{1,2,3\}$.
% \begin{assumption}[Time series Global Markov property]\label{ass:markov}
%     For any finite disjoint sets $I_1, I_2, I_3 \subset [m]\times \integers$, if $\*S_1 \ind_d \*S_2 \ | \  \*S_3$ in $\gts$, then $\*S_1 \ind_d \*S_2 \ | \  \*S_3$. 
% \end{assumption}
\begin{assumption}[Time series local Markov property]\label{ass:localmarkov}
    For every $X^i_t \in (\*X_t)_{t \in \integers}$ and every finite set $I_1$, such that $I_1$ contains no descendant of $X^i_t$, $X^i_t \ind I_1 | \pa(X^i_t)$, where parents and descendants are defined w.r.t.~$\gts$.
\end{assumption}
\begin{assumption}[Time series faithfulness property]\label{ass:faith}
   For any finite disjoint sets $I_1, I_2, I_3 \subset [m]\times \integers$, if $\*S_1 \ind \*S_2 \ | \  \*S_3$, then $\*S_1 \ind_d \*S_2 \ | \  \*S_3$ in $\gts$.
\end{assumption}
Furthermore, the requirement that noises $\boldeta_t = (\eta_t^1, \dots,\eta_t^m)$ are jointly independent imposes \emph{causal sufficiency}, namely there are no latent confounders. For two nodes $X^i_s,X^j_t$ in $\gts$ such that $s \leq t$ and $X^i_s \in \pa(X^j_t)$ , the difference $\tau = t - s$ is referred to as the \emph{time lag} corresponding to the edge $X^i_s \to X^j_t$. %Consequently, all directed edges in $\gts$ have an associated time lag. % require the following definitions to state the \cref{ass:station}. % of \emph{causal stationarity}. 
% \begin{definition}[Time lag]
% For two time indices $s < t$, the difference $\tau = t - s$ is called a
% \emph{time lag}.
% \end{definition}
%We now state further assumptions common for time series causal discovery.
% Structural causal models (SCMs) can be extended to the time-series setting by
% allowing the structural equations of a variable at time $t$ to depend on
% variables at earlier time points.
% Accordingly, each variable $X_t^i$ may be modeled as a function of variables
% $\{X_{t-\tau}^j\}$ at different time lags and an associated exogenous noise term
% \cite{Runge2018CITS}.
In accordance with the time-series SCM formulation, causal relationships are
restricted to respect the temporal order of the data, such that variables at
future time points cannot be parents of variables at earlier time points. Note that we allow for zero time lags (i.e. contemporaneous edges), in order to account for insufficient time resolution of the time series \citep{RungeCITS}.

\begin{assumption}[Maximum time lag]\label{ass:maxlag}
The time-series causal graph $\gts$ is assumed to have a known finite maximum time lag
$\tmax \geq 0$, such that every edge in $\gts$ has an associated
time lag $\tau \leq \tmax$.
\end{assumption}

\begin{assumption}[Stationary causal structure]\label{ass:station}
The causal structure of ts-SCM \ref{eq:tsSCM} is \emph{stationary}, namely the causal mechanisms $f^i$ corresponding to $X^i_t$ in \eqref{eq:tsSCM} remain constant for all $t\in \integers$. %do not change over time.
% Consequently, causal relations depend only on relative time lags and not on absolute time indices.
\end{assumption}
\cref{ass:station} implies that the ts-DAG $\gts$ entailed by ts-SCM \ref{eq:tsSCM} has the \emph{repeating edge property} whereby the edges with an arrowhead on (or incident on) $\*X_t$ for any $t \in \integers$ are sufficient to reconstruct the entire causal graph $\gts$ \citep{Gerhardus_AoS}. $\*X_t$, for an arbitrary $t$, will be referred to as the \emph{contemporaneous slice} of time series $(\*X_t)_{t \in \integers}$. By \cref{ass:maxlag} and \ref{ass:station}, we can conclude that the following target object of time series causal discovery yields $\gts$ \citep{Assaad_survey}.% called the \emph{window causal graph} \citep{Assaad_survey}.

\begin{definition}[Window causal graph]\label{def:window_graph}
    Let $\tmax$ be the maximum time lag ts-DAG $\gts$ and $\cW := [t-\tmax, t]$ be a time window for an arbitrary $t \in \integers$ over the set of discrete time indices. Then, the finite index set $I_\cW = [m] \times \cW$, results in the finite set of vertices $\sw = \{X^i_t \ : \ (i,t)\in I_\cW\}$ in the ts-DAG $\gts$. The induced graph over the set $\sw$ is defined as the window causal graph (or simply window graph) and denoted by $\gw$.
\end{definition}

% \subsection{Assumptions}

% \begin{definition}[Multivariate time series]
% Let $\mathbf{X}_t = (X_t^1,\dots,X_t^m)$ denote a vector of $m$ random variables
% observed at discrete time points $t \in \integers$.
% The collection $\{\mathbf{X}_t\}_{t \in \integers}$ is called a
% \emph{multivariate time series}.
% \end{definition}

\subsection{Causal Discovery with Single versus Multiple Multivariate Time Series}

Time series causal discovery is commonly studied in two formats: the \emph{multiple time series} setting, where multiple independent multivariate time series with same underling ts-SCM are observed, and the \emph{sliding-window} setting, where a single dependent multivariate series is converted into data over time windows with a known maximum lag \citep{runge2020discovering,manten25, wiecksosa_CIT_single}. In both cases, the central problem is identifying the parents of the contemporaneous slice in the resulting window graph, since \cref{ass:station} ensures the reconstruction of the time series graph once the window graph is known. While consistency of the Bayesian scoring criterion (BSC), employed as the score in score-based causal discovery, in the multiple time series setting follows relatively directly from standard i.i.d.~arguments, it is more delicate in the sliding-window setting due to sample dependence, though recent works have established consistency for the Bayesian information criterion (BIC) under certain assumptions \citep{bardet2021efficient}. Here, we focus on this shared window-graph discovery problem while assuming score consistency (\cref{ass:asymptotic}), and present a comparison of the single versus multiple time series data with BIC score in our experimental analysis (\cref{sec:experiments}).  %and we point to existing results establishing BIC-type consistency for time series models.
% \begin{itemize}
%     \item Two formats for time series causal discovery: sliding window vs ensemble setting
%     \item Common problem in both formats: search for parents of contemp slice 
%     \item Score consistency follows trivially in ensemble case, less trivial in sliding window case 
%     \item Our approach: Focus on common problem of window graph discovery while assuming score consistency. Present references on BIC consistency for time series.
% \end{itemize}

\section{TS-BOSS}\label{sec:tsboss}

\subsection{Overview of BOSS}
% \begin{itemize}
%     \item Broadly explain score-based logic. Score consistency and decomposability imply local consistency.
%     \item Explain logic of permutation-based approaches within score-based methods. Instead searching over entire space of DAGs, search over permutation. Relies on subgraph minimality, i.e., each permutation has a subgraph minimal DAG associated to it.
%     \item BOSS phase 1. Permutation search using best position of a variable in a permutation, permutation to DAG using grow-shrink trees.
%     \item BOSS phase 2 = backward equivalence search (BES). 
% \end{itemize}

Score-based causal discovery selects a DAG by optimizing a score that trades off goodness-of-fit and complexity; under score consistency and decomposability, improvements in the global score translate into \emph{locally consistent} decisions about adding or removing individual parent-child relationships (see Definition 5 and 6 in \cite{Chickering_ges} for details on score consistency). Permutation-based methods exploit this structure by searching over variable permutations rather than the full DAG space: for each permutation, one can associate a unique subgraph-minimal DAG consistent with that order (Corollary 2 in \cite{verma_pearl_networks}), reducing the search to identifying an optimal permutation and its induced minimal graph. BOSS operationalizes this in two stages: in Phase 1 it performs a permutation search by repeatedly finding a variable’s best position in the ordering and constructing the corresponding DAG efficiently via grow–shrink trees (illustrated in \cref{fig:grow} and \ref{fig:shrink});  and in Phase 2 it applies a backward equivalence search (BES, also second phase of GES) to perform edge deletions that improve the score which guarantees asymptotic correctness.
\begin{figure}
    \centering
    \includegraphics[width=0.85\linewidth]{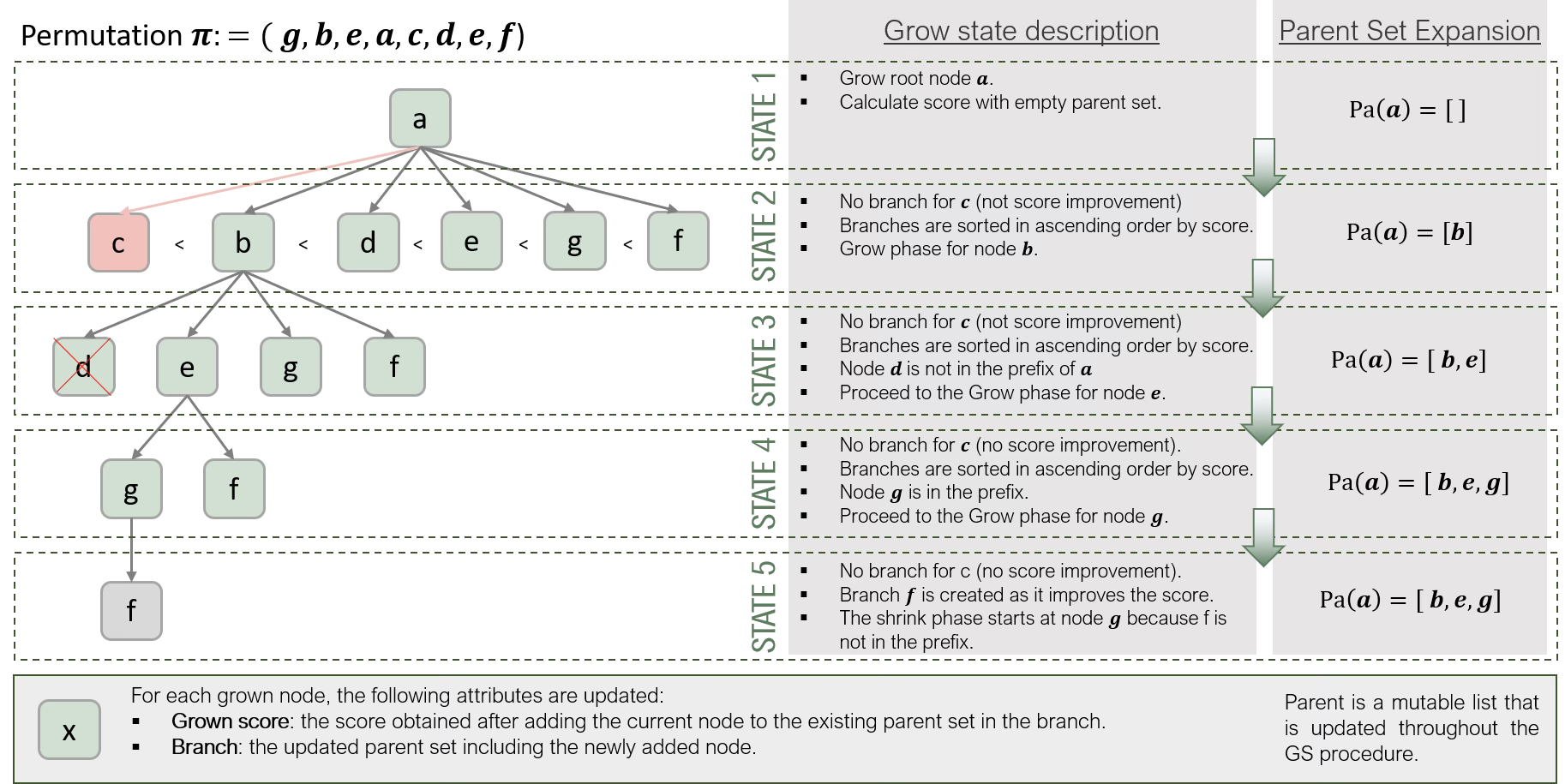}
    \caption{An overview of the \texttt{grow} algorithm presented in \cite{BOSS}. Given a target variable \(a\) and a permutation \(\pi\), the grow phase builds $a$'s candidate parent set by evaluating each available variable as a possible addition to the parent set and constructing branches on those that strictly improve \(a\)’s score. It then sorts these branches by the resulting (post-addition)
    grow score and chooses the best-ranked branch permissible by the order of $\pi$, that variable is added as a parent; the procedure continues until no further improving variable from the prefix of $\pi$ can can be added to $a$'s parent set.}
    % and chooses the best-ranked branch whose variable lies in the prefix set \(\mathrm{pre}_{\pi}(X)\); that variable is added as a parent and the procedure recurses.}
    \label{fig:grow}
\end{figure}
\begin{figure}
    \centering
    \includegraphics[width=0.81\linewidth]{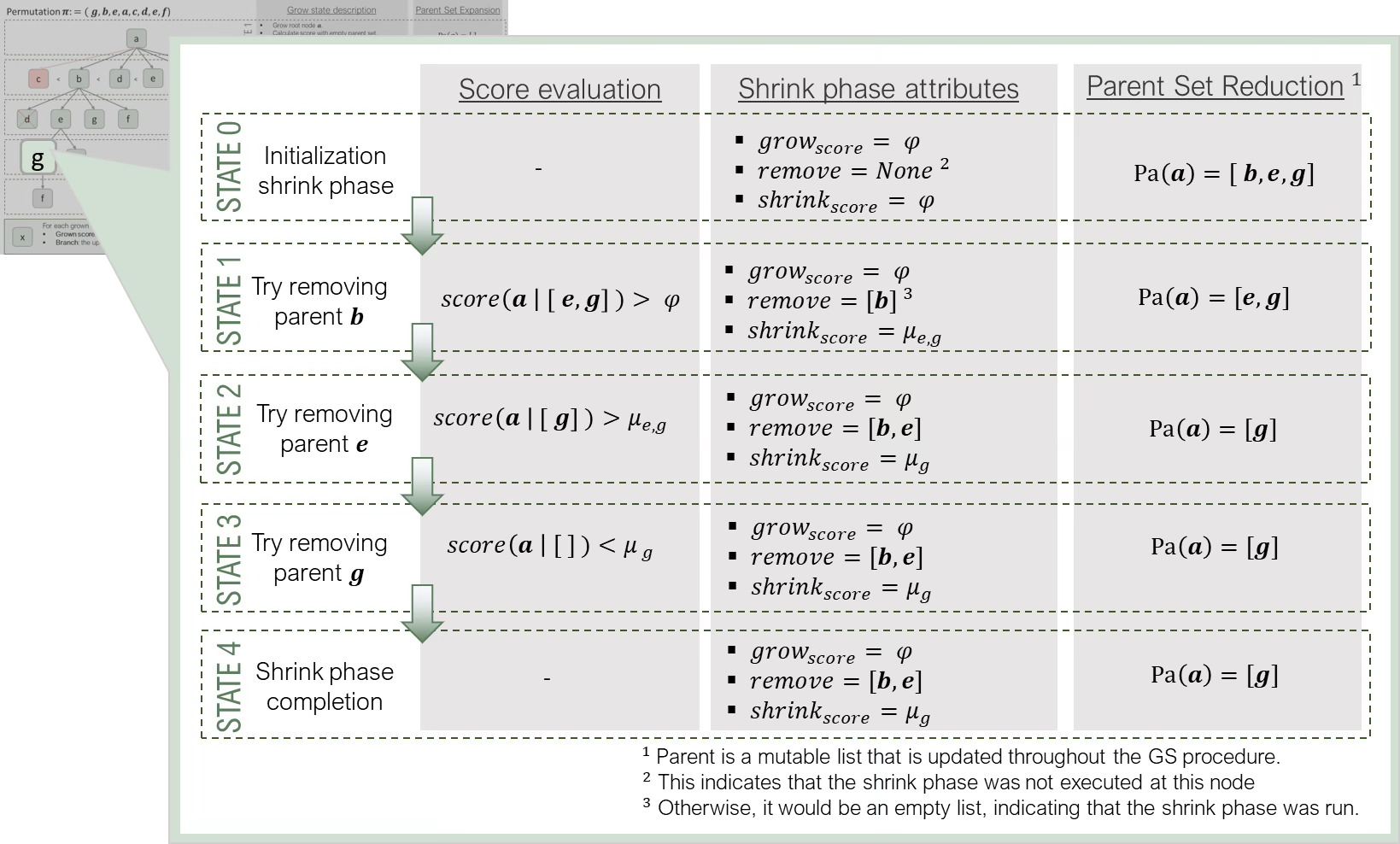}
    \caption{An overview of the \texttt{shrink} algorithm presented in \cite{BOSS}. After the grow phase, the parent set can include redundant variables, for instance, those that became redundant once other parents were added. The shrink phase prunes this set by iteratively removing any parent whose deletion improves the score, repeating until no further score-improving removals are possible.}
    \label{fig:shrink}
\end{figure}

\subsection{TS-BOSS Pseudocode and Adaptation Challenges}
In the following we present the adaptation idea behind TS-BOSS, and present a pseudocode in Algorithm \ref{alg:tsboss}.
We note here that the challenges of adaptation to time series and our proposed solution are common to most discrete-search score-based methods and thus their application is not limited to extensions of BOSS.

% \begin{itemize}
%     \item \un{TODO for Irene, Condense section 4.4,4.6,4.7 of project report} %Impose temporal constraints in search procedure (stationarity, max lag, causal order).
%     Mention that the challenges of adaptation to time series and our proposed solution are common to most discrete-search score-based methods,not just BOSS. 
%     % \item Define modified permutation-induced window causal graph (RU') corresponding to RU defined in \cite{lam_grasp}
%     \item \un{TODO for Irene} Pseudocode
%     \item  %Define TS-BES for window causal graph ($\boldsymbol{\cE}^-(\cE)$ corresponds to those DAGs whose parent sets for contemp nodes are fewer.)
% \end{itemize}
Let $(\mathbf{X}_t)_{t \in \mathbb{Z}}$ denote a stationary multivariate time series with maximum time lag $\tau_{\max} \in \mathbb{N}$. To recover the window causal graph up to its Markov equivalence class, TS-BOSS proceeds in two phases.

\paragraph{First Phase of TS-BOSS.}
The first phase extends the permutation search and grow--shrink procedure of BOSS via the following modifications:

\begin{itemize}[topsep=0pt,noitemsep, itemindent=0pt]

\item \textbf{Time-window unrolling.}  
For a fixed $\tau_{\max}$, the time series is unrolled into the variable set
\[
\{\mathbf{X}_{t-\tau_{\max}}, \dots, \mathbf{X}_t\}, \quad
\text{where } \mathbf{X}_t = (X_t^1,\dots,X_t^m).
\]

\item \textbf{Temporal order constraint in permutations} Temporal order is enforced in time series: lagged variables must precede contemporaneous variables. Using the notation of \cite{BOSS}, the lagged variables must be always placed in early positions than the contemporaneous, permutations take the form  \[
\pi
=
(
\underbrace{\mathbf{X}_{t-\tau_{\max}},\dots,\mathbf{X}_{t-1}}_{
\begin{array}{c} \scriptstyle
\text{fixed across permutations} 
\end{array}
},
\, \mathbf{X}_t
).
\]

\item \textbf{Restricted permutation search space.} 
Under stationarity, the causal structure is invariant over time. Hence, only edges from lagged variables to $\mathbf{X}_t$ and edges within $\mathbf{X}_t$ must be determined. The permutation search is therefore restricted to contemporaneous variables: only contemporaneous variables are permuted and grow--shrink trees are constructed only for nodes in $\mathbf{X}_t$, while including lagged variables as candidate parents.
\end{itemize}
We now formalize the first phase of TS-BOSS. 
Algorithm~\ref{alg:tsboss} presents the complete pseudocode, following 
the structure of BOSS \citep{BOSS} with the time-series adaptations 
described above. The algorithm relies on several auxiliary subroutines, 
which we briefly describe for completeness. The subroutine $\mathtt{Unroll}$ restructures the time series into a 
lag-unrolled representation, in which the variables 
$\{X_{t-\tau}^i\}_{\tau=0}^{\tau_{\max}}$ are treated as distinct nodes for scoring and permutation search. The grow--shrink procedure $\mathtt{GST}$ and the projection step $\mathtt{Project}$ follow the corresponding routines in BOSS \citep{BOSS}. The routine $\mathtt{BestTSMove}$ extends the BOSS $\mathtt{best\_move}$ subroutine by restricting the permutation search to admissible contemporaneous positions (see 
Algorithm~\ref{alg:besttsmove}). Finally, the subroutine $\mathtt{TSDAGtoTSCPDAG}$ constructs the TS-CPDAG from the induced TS-DAG by applying the procedure described in Appendix~\ref{app:tscpdag}.

\begin{algorithm}[t]
\DontPrintSemicolon
\caption{TS-BOSS (First Phase)}
\label{alg:tsboss}
\hrule
\hspace{1pt}

\KwIn{Time series data $\mathcal{D}$, maximum lag $\tau_{\max}\ge 0$}
\KwOut{Estimated TS-CPDAG $\widehat{\mathcal{G}}$}

\BlankLine

$\widehat{\mathcal{D}} \leftarrow \proc{Unroll}(\mathcal{D}, \tau_{\max})$\;
\tcp*[r]{Unroll time series data}

\BlankLine
$\pi \leftarrow \big(\mathbf{X}_{t-\tau_{\max}}, \dots, \mathbf{X}_{t-1}, \mathbf{X}_t\big)$\;
\tcp*[r]{Initialize permutation}

\BlankLine
$\mathcal{T} \leftarrow \{\proc{GST}(X_t^i,\widehat{\mathcal{D}})\}_{i=1}^m$\;
\tcp*[r]{Initialize GS trees for $X_t$}

\BlankLine

$\mathrm{improved} \leftarrow \mathrm{True}$\;\tcp*[r]{Flag controlling permutation search}

\While{$\mathrm{improved}$}{
    $s_{\mathrm{best}} \leftarrow \mathcal{T}.\proc{Score}(\pi)$\; \tcp*[r]{Evaluate score of current permutation}
    
    $\mathrm{improved} \leftarrow \mathrm{False}$\;

    \For{$i \leftarrow 1$ \KwTo $m$}{
        $\pi' \leftarrow \proc{BestTSMove}(\mathcal{T}, \pi, X_t^i)$\;
        \tcp*[r]{Only contemporaneous variables}
        
        \If{$\mathcal{T}.\proc{Score}(\pi') > s_{\mathrm{best}}$}{
            $\pi \leftarrow \pi'$\; \tcp*[r]{Update permutation}
            
            $\mathrm{improved} \leftarrow \mathrm{True}$\;
            
            \textbf{break}\;
        }
    }
}

\BlankLine
$\mathcal{G} \leftarrow \mathcal{T}.\proc{Project}(\pi)$\;\tcp*[r]{TS-DAG induced by permutation $\pi$}

$\widehat{\mathcal{G}} \leftarrow \proc{TSDAGtoTSCPDAG}(\mathcal{G})$\;\tcp*[r]{Compute TS-CPDAG of $\mathcal{G}$}

\Return $\widehat{\mathcal{G}}$\;
\BlankLine
\hrule
\end{algorithm}

% \begin{algorithm}[t]
% \DontPrintSemicolon
% \caption{$\mathtt{BestTSMove}$}
% \label{alg:besttsmove}
% \hrule
% \hspace{1pt}

% \KwIn{Grow--shrink trees $\mathcal{T}$, permutation $\pi$, contemporaneous variable $X_t^i$}
% \KwOut{Updated permutation $\pi$}

% $pos \leftarrow m \cdot \tau_{\max}$\;
% \tcp*[r]{Starting index of contemporaneous block}

% $s_{\mathrm{best}} \leftarrow \mathcal{T}.\proc{Score}(\pi)$\;

% $j \leftarrow \pi.\proc{index}(X_t^i)$\;
% \tcp*[r]{Current position of $X_t^i$}

% \For{$k \leftarrow 1$ \KwTo $m$}{
%     $\pi \leftarrow \pi.\proc{move}(X_t^i,  pos + k)$\;\tcp*[r]{Move $X_t^i$ within contemporaneous block}
    
%     \If{$\mathcal{T}.\proc{Score}(\pi) > s_{\mathrm{best}}$}{
%         $s_{\mathrm{best}} \leftarrow \mathcal{T}.\proc{Score}(\pi)$\;
        
%         $j \leftarrow pos + k$\;
%     }
% }

% $\pi \leftarrow \pi.\proc{move}(X_t^i, j)$\;
% \tcp*[r]{Place $X_t^i$ at best position}

% \Return $\pi$\;
% \BlankLine
% \hrule
% \end{algorithm}

\paragraph{Second Phase of TS-BOSS} In order to prove that the resulting window graph is the most minimal graph that describes the data (i.e.~it fulfills \cref{ass:faith} with respect to the time series), the first phase of TS-BOSS must be supplemented with a time series adaptation if the backward equivalence search (TS-BES). TS-BES operates analogously to BES in the static setting with the difference that the neighboring window graphs in the search space correspond to those whose parent sets for the nodes in the contemporaneous slice are fewer.

\section{Theoretical Results}\label{sec:theory}
In this section, we present results that guarantee correctness of TS-BOSS in the large sample limit.
\begin{assumption}\label{ass:asymptotic}
    We assume access to:
    \begin{itemize}[topsep=0pt,noitemsep]
        \item[(i)] a locally consistent Bayesian scoring criterion $\cB$ for DAGs and, %essentially it is the local consistency of the score that implicityl implies that the true ts-DAG is marov and faithful, because the score tries to find the "simplest" DAG that fits the distribution
        \item[(ii)] i.i.d.~data $\*D$ for nodes $\sw$ in the window graph $\gw$, see  definition \ref{def:window_graph}.
    \end{itemize}
\end{assumption}
\cite{Haughton1988} has shown that for distributions that are in the curved exponential family, eg.~multivariate Gaussian distributions, the Bayesian scoring criterion equals the Bayesian information criterion (BIC) plus a constant error term that becomes insignificant in the large sample limit. We use the BIC score in our TS-BOSS implementation.

In order to employ permutation-search over window causal graph, we will now introduce a few definitions for the time series setting. In the following, we refer to a permutation $\pi$ over $\sw$ as an \emph{admissible permutation} if it respects time order, i.e., for the $i^{th}$ vertex in $\pi$ called $\pi_i \in \sw$, all vertices that precede $\pi_i$ in $\pi$, denoted by $\pre(i,\pi) = \{\pi_j \ : 1 \leq j < i\}$, have time indices less than or equal to the time index of $\pi_i$. Furthermore, the joint probability distribution over the variables $\sw$ is denoted by $P_\cW$, and it satisfies the graphoid axioms \citep{verma_pearl_networks}.

\begin{algorithm}[t]
\DontPrintSemicolon
\caption{$\mathtt{BestTSMove}$}
\label{alg:besttsmove}
\hrule
\hspace{1pt}

\KwIn{Grow--shrink trees $\mathcal{T}$, permutation $\pi$, contemporaneous variable $X_t^i$}
\KwOut{Updated permutation $\pi$}

$pos \leftarrow m \cdot \tau_{\max}$\;
\tcp*[r]{Starting index of contemporaneous block}

$s_{\mathrm{best}} \leftarrow \mathcal{T}.\proc{Score}(\pi)$\;

$j \leftarrow \pi.\proc{index}(X_t^i)$\;
\tcp*[r]{Current position of $X_t^i$}

\For{$k \leftarrow 1$ \KwTo $m$}{
    $\pi \leftarrow \pi.\proc{move}(X_t^i,  pos + k)$\;\tcp*[r]{Move $X_t^i$ within contemporaneous block}
    
    \If{$\mathcal{T}.\proc{Score}(\pi) > s_{\mathrm{best}}$}{
        $s_{\mathrm{best}} \leftarrow \mathcal{T}.\proc{Score}(\pi)$\;
        
        $j \leftarrow pos + k$\;
    }
}

$\pi \leftarrow \pi.\proc{move}(X_t^i, j)$\;
\tcp*[r]{Place $X_t^i$ at best position}

\Return $\pi$\;
\BlankLine
\hrule
\end{algorithm}

\begin{definition}[Permutation-induced window graph]\label{def:RUprime}
  %Let $\sw$ denote the variables in a window graph $\gw$.
  Let $P_\cW$ be a graphoid over variables $\sw$ associated with the window graph $\gw$. Each admissible permutation $\pi$ of $\sw$ induces a DAG $\gw^\pi$ over $\sw$ defined as follows:
  \begin{itemize}[topsep=0pt,noitemsep]
      \item[(i)] Let $t$ be the maximal time index of all the variables in $\pi$. For each $X^i_{t}\in\sw$ and each $X^j_{s} \in \pre(X^i_{t}, \pi)$, if $X^i_{t} \nind X^j_s \ | \  \pre(X^i_{t}, \pi) \setminus\{X^j_s\}$, then $ X^j_s \to X^i_{t}$ in $\gw^\pi$,
      \item[(ii)] For all variable indices $i,j$, time indices $t'<t$ in $\pi$ and $s' \geq 0$, there exists an edge $ X^j_{t'-s'} \to X^i_{t'}$ in $\gw^\pi$ if and only if $ X^j_{t-s'} \to X^i_{t}$ was found in (i) above.
  \end{itemize}
\end{definition}
Condition (i) in definition \ref{def:RUprime} constructs the edges incident into the variables at the maximal time index (also known as the `contemporaneous slice' in the window graph), and is analogous to the method discussed in \cite{raskutti_sp}. Condition (ii) is needed in order to satisfy time-shift invariance (\cref{ass:station}) that is implicit in the definition of a valid window graph. We use the term \emph{stationary graph} for any window graph that satisfies condition (ii).
\begin{definition}[Window Markov property]\label{def:window_Markov}
    %Let $P_\cW$ be a graphoid over $\sw$ and $\gw^\pi$ be a window graph induced from an admissible permutation $\pi$. $(P_\cW, \gw^\pi) $
     A graphoid $P$ and a window graph $G$ over variables $\sw$ is said to satisfy the window Markov property if for all $X^i_t \in \sw$,  $X^i_t \ind \nd(X^i_t)_G|\pa(X^i_t)_G$ for all $i$ and the maximal time index $t$ in $G$. Here $\nd(\cdot)_G$ refers to the set of non-descendants in the graph $G$. 
\end{definition}
From assumptions \ref{ass:localmarkov}, \ref{ass:maxlag}, \ref{ass:station} and definition \ref{def:window_Markov} it follows that if a ts-DAG $\gts$ satisfies the local Markov property with respect to a multivariate time series, then its corresponding window graph $\gw$ satisfies the window Markov property with respect to the graphoid induced over $\sw$ and vice-versa. % vice-versa because no parent can lie beyond the maximal lag, so all parents have been found when window Markov is satisfied and then local Markov holds because everything in the past is a non-descendant. 
\begin{definition}[Window subgraph minimality]\label{def:window_SGS}
    Let a graph $\gw$ over vertices $\sw$ satisfy the window Markov property w.r.t.~a graphoid $P_\cW$. Then $\gw$ is said to be window subgraph minimal if no stationary subgraph of $\gw$ satisfies the window Markov property.
\end{definition}
For an overview on subgraph minimality (also known as the minimal I-MAP characterization) in the non-time series setting, see \cite{pearl_probabilistic_reasoning_1988, verma_pearl_networks, raskutti_sp}. \cref{ass:station} implies if a window graph satisfies window subgraph minimality then the corresponding ts-DAG satisfies subgraph minimality with respect to time series local Markov property. 
\begin{theorem}[Permutation-induced window graph minimality]\label{thm:window_Verma}
    Let $P_\cW$ be a graphoid over $\sw$ and $\gw^\pi$ be a window graph induced from an admissible permutation $\pi$. Then $\gw^\pi$ satisfies the window Markov property and is window subgraph minimal. 
\end{theorem}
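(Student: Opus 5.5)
The plan is to reduce the statement to the classical result of \cite{verma_pearl_networks} (Theorem 2 and Corollary 2) applied to the \emph{last slice only}, and then use condition (ii) of \cref{def:RUprime} together with the stationarity restriction in \cref{def:window_SGS} to lift it to the whole window graph. The key observation is that both the window Markov property (\cref{def:window_Markov}) and window subgraph minimality only constrain the nodes $X^i_t$ at the maximal time index $t$. Edges into earlier slices are fully determined by the edges into the last slice through condition (ii).

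\textbf{Window Markov property.} Fix $X^i_t$ in the last slice. By admissibility of $\pi$, every element of $\pre(X^i_t,\pi)$ has time index at most $t$. Step (i) of \cref{def:RUprime} sets $\pa(X^i_t)_{\gw^\pi} = \{X^j_s \in \pre(X^i_t,\pi) : X^i_t \nind X^j_s \mid \pre(X^i_t,\pi)\setminus\{X^j_s\}\}$; call this set $B_i$. I will follow Verma--Pearl's argument and show $X^i_t \ind \pre(X^i_t,\pi)\setminus B_i \mid B_i$. This uses only the graphoid axioms: remove the non-parents one at a time, each time applying intersection to the statements $X^i_t\ind X^j_s\mid \pre\setminus\{X^j_s\}$, then finish with weak union and contraction.

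Next I need $\nd(X^i_t)_{\gw^\pi} = \pre(X^i_t,\pi)$, or at least $\nd(X^i_t)\subseteq \pre(X^i_t,\pi)\cup\{\text{nodes}\}$ in a way that still gives the independence. This is where the lagged edges matter, and I expect it to be the main obstacle. Edges into earlier slices go from time $t'-s'$ to $t'\le t-1$, so they never point into slice $t$. Hence the descendants of $X^i_t$ lie in slice $t$ and are reached only through edges added in step (i). Those edges respect $\pi$, so every descendant of $X^i_t$ comes after it in $\pi$. Every node after $X^i_t$ in $\pi$ lies in slice $t$, because lagged variables precede contemporaneous ones. So each such node is either a descendant of $X^i_t$, or a later contemporaneous node that is not a descendant.

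The second case requires care. The plan is the standard one for the contemporaneous block: order the slice-$t$ nodes by $\pi$ and use the graphoid version of the ordered Markov property implying the local one (Verma--Pearl Theorem 2). This applies to the full DAG $\gw^\pi$, since edges among slice-$t$ nodes and from the past into slice $t$ come exactly from step (i). The earlier-slice nodes form a fixed prefix whose internal edges never enter the independence statements for slice-$t$ nodes. One must also check that $\gw^\pi$ is acyclic; this holds because edges follow $\pi$ in slice $t$ and point forward in time otherwise.

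\textbf{Minimality.} Suppose a stationary proper subgraph $G'$ of $\gw^\pi$ satisfies the window Markov property. By stationarity, if $G'$ drops an edge into some earlier slice, it must also drop the corresponding edge into slice $t$. So $G'$ removes some edge $X^j_s\to X^i_t$. In $G'$ the node $X^i_t$ has the same ordering, so $\pre(X^i_t,\pi)\setminus\{X^j_s\}$ still consists of non-descendants. The window Markov property for $G'$ gives $X^i_t\ind \nd(X^i_t)_{G'}\mid \pa(X^i_t)_{G'}$. Applying weak union, this yields $X^i_t \ind X^j_s \mid \pre(X^i_t,\pi)\setminus\{X^j_s\}$. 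That contradicts the criterion in step (i). The delicate point is that $\nd_{G'}$ must contain $\pre(X^i_t,\pi)$; this holds because $G'\subseteq\gw^\pi$ has fewer descendant relations.
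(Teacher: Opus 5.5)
Your plan is essentially the paper's. You reduce the Markov part to Verma--Pearl at the slice-$t$ nodes, and you get minimality by deriving a weak-union statement that contradicts condition (i). The minimality half is correct, and it reaches condition (i) more directly than the paper does: using $\pre(X^i_t,\pi)\subseteq\nd(X^i_t)_{G'}$, you condition on exactly $\pre(X^i_t,\pi)\setminus\{X^j_s\}$.

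\textbf{The gap.} The step that fails as written is the claim that Verma--Pearl's theorem ``applies to the full DAG $\gw^\pi$.'' That result needs \emph{every} node $X$ of the ordering to have a parent set $B$ with $X\ind \pre(X)\setminus B\mid B$, and it then concludes that the DAG is an I-map of $P_\cW$. In $\gw^\pi$ the lagged nodes get their parents by copying slice $t$ through condition (ii), and parents lying before the window are simply absent, so this hypothesis generally fails.

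Here is a concrete counterexample. Take $m=2$, $\tmax=1$, a linear Gaussian process with nonzero lag-one edges $X_{s-1}\to X_s$, $Y_{s-1}\to Y_s$, $X_{s-1}\to Y_s$, and $\pi=(X_{t-1},Y_{t-1},X_t,Y_t)$.
\begin{itemize}
\item Step (i) yields only $X_{t-1}\to X_t$, $X_{t-1}\to Y_t$ and $Y_{t-1}\to Y_t$.
\item Condition (ii) adds nothing, since no contemporaneous edges are found and lag-one sources for slice $t-1$ lie outside the window.
\item So $X_{t-1}$ and $Y_{t-1}$ are d-separated by $\emptyset$ in $\gw^\pi$.
\item Yet they are dependent through the out-of-window common cause $X_{t-2}$.
\end{itemize}
Hence $\gw^\pi$ is not an I-map, and no argument through its global Markov property can work.

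\textbf{The fix.} Your remark that the lagged prefix's internal edges ``never enter the independence statements for slice-$t$ nodes'' is the right intuition. It has to be implemented by changing the graph, which is what the paper does with $\cG_{RU}=RU(\pi)$. Apply Verma--Pearl to an auxiliary DAG over $\sw$ that satisfies the ordered condition at every node. Two choices work:
\begin{itemize}
\item the paper's $RU(\pi)$, i.e.\ the boundary DAG of $P_\cW$ for the whole ordering $\pi$ with time indices ignored;
\item your step-(i) edges into slice $t$ together with a complete DAG on the lagged prefix in the order of $\pi$. Here the lagged nodes satisfy the condition trivially, and the slice-$t$ nodes satisfy it by your intersection argument.
\end{itemize}
This auxiliary DAG is an I-map. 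It has the same parents of each $X^i_t$ as $\gw^\pi$. It also has the same descendants of $X^i_t$, because in both graphs every edge leaving slice $t$ goes to a later slice-$t$ node in $\pi$. Hence $X^i_t\ind\nd(X^i_t)\mid\pa(X^i_t)$ transfers verbatim to $\gw^\pi$, which is all the window Markov property asks for.

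A minor point on your acyclicity check: the contemporaneous edges copied into earlier slices by condition (ii) do not point forward in time. They are acyclic because they reproduce the $\pi$-ordered contemporaneous pattern of slice $t$.
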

All proofs are relegated to Appendix \ref{app:proofs}. In the following result, subgraph minimality for a ts-DAG refers to minimality w.r.t.~the time series local Markov property, see explanation after definition \ref{def:window_SGS}.
\begin{lemma}
    %Let $\cB(\cG,\*D)$ be a locally consistent Bayesian scoring criterion for DAG $\cG$ and i.i.d.~window data $\*D$ over the vertices $\sw$.
    Let assumptions \ref{ass:localmarkov}-\ref{ass:asymptotic} be satisfied for data $\*D$ and ts-DAG $\gts$ resulting from a ts-SCM over time series $(\*X_t)_{t\in \integers}$, and a Bayesian scoring criterion $\cB(\cG,\*D)$ over DAGs $\cG$.
    Then the output graph of phase 1 of TS-BOSS over data $\*D$ with score  $\cB(\cG,\*D)$ satisfies time series local Markov property w.r.t.~$(\*X_t)_{t\in \integers}$ and is subgraph minimal in the large sample limit.
    % Then thw phase 1 of TS-BOSS returns the MEC of the true window causal graph $\gw$ in the large sample limit.
\end{lemma}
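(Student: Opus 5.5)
The plan is to reduce the claim to \cref{thm:window_Verma} and then lift it from the window graph to the ts-DAG via stationarity. First I will argue that, in the large sample limit, the DAG returned by $\mathcal{T}.\proc{Project}(\pi)$ for the final admissible permutation $\pi$ coincides with the permutation-induced window graph $\gw^\pi$ of \cref{def:RUprime}. The argument is local to each node. For each contemporaneous node $X^i_t$, the grow--shrink tree builds a parent set drawn only from $\pre(X^i_t,\pi)$, using score-improving additions followed by score-improving deletions. By \cref{ass:asymptotic}(i) the data are i.i.d.\ over $\sw$ and the score is decomposable and locally consistent. Hence adding $X^j_s$ to a current parent set $\*Z$ improves the score iff $X^i_t \nind X^j_s \mid \*Z$, and deleting it improves the score iff $X^i_t \ind X^j_s \mid \*Z\setminus\{X^j_s\}$.

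Termination of grow then gives $X^i_t \ind \pre(X^i_t,\pi)\setminus \*Z \mid \*Z$, using the composition property, which holds here because under \cref{ass:localmarkov} and \ref{ass:faith} $P_\cW$ is faithful to a DAG. Termination of shrink leaves a set $\*Z$ that is minimal with this property. By the intersection and weak-union graphoid axioms, this minimal Markov blanket within $\pre(X^i_t,\pi)$ is unique and equals $\{X^j_s : X^i_t \nind X^j_s \mid \pre(X^i_t,\pi)\setminus\{X^j_s\}\}$. That is exactly the parent set specified in \cref{def:RUprime}(i). The lagged slices are never scored. The projected window graph is completed by time-shifting the contemporaneous parent sets, which is condition (ii), and this completion is legitimate because of \cref{ass:maxlag} and \ref{ass:station}.

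With this identification in hand, \cref{thm:window_Verma} applied to $P_\cW$ and the admissible $\pi$ gives that the output satisfies the window Markov property and is window subgraph minimal. Since the permutation search only moves contemporaneous variables, every permutation visited has the form displayed in the first phase and is therefore admissible. The conclusion thus holds for whatever permutation the while-loop terminates at, and no optimality of $\pi$ is needed for this lemma.

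Finally I will transfer the result to the time series. Because the output is stationary and, by \cref{ass:maxlag}, no parent lies outside the window, the remark after \cref{def:window_Markov} converts the window Markov property into the time series local Markov property for the unrolled ts-DAG. The remark after \cref{def:window_SGS} converts window subgraph minimality into subgraph minimality of the ts-DAG. For the first conversion, the non-descendants of $X^i_t$ outside $\sw$ must also be screened off by the parents; I will argue this using the repeating-edge structure together with \cref{ass:localmarkov} for the true process.

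I expect the main obstacle to be the first step: showing rigorously that the GST grow/shrink output equals the set in \cref{def:RUprime}(i) in the limit. This needs local consistency to translate score comparisons into exact CI decisions uniformly over the finitely many candidate sets, and it needs the composition and intersection properties, which are not implied by the graphoid axioms alone. I would first try to obtain both properties from faithfulness of $P_\cW$ to $\gw$, by restricting \cref{ass:faith} and the Markov property to $\sw$. If that fails, I would instead state positivity and composition of $P_\cW$ as an explicit standing condition.
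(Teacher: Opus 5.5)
Your route is correct in outline but differs from the paper's, and it covers more than the paper's argument does.

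\paragraph{Comparison with the paper's proof.} The paper argues by contradiction on the terminal output. If the time series local Markov property fails, it takes a single window non-descendant $X^j_s$ with $X^i_t \nind X^j_s \mid \pa(X^i_t)$. It then uses local consistency to say that adding $X^j_s \to X^i_t$ would raise the score, so phase 1 could not have stopped at the last permutation. You instead show that, in the limit, the grow--shrink projection for any admissible $\pi$ is the boundary DAG $\gw^\pi$ of \cref{def:RUprime}. You then read off both the window Markov property and window subgraph minimality from \cref{thm:window_Verma}, and lift to $\gts$. Your route buys three things:
\begin{itemize}
\item minimality, which the paper's proof never argues;
\item non-descendants of $X^i_t$ placed after it in $\pi$, for which ``adding the edge when computing the permutation-induced graph'' is not an available move; you get these from the Verma--Pearl I-map property inside \cref{thm:window_Verma};
\item the correct observation that the stopping rule of the while-loop plays no role.
\end{itemize}
It also makes the need for composition explicit. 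The paper uses composition silently when it passes from a dependent finite set of non-descendants to a single dependent variable. The cost of your route is the extra grow--shrink analysis, which the paper bypasses.

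\paragraph{Two adjustments.} First, do not try to get composition from faithfulness of $P_\cW$ to $\gw$. $P_\cW$ is in general not even Markov to the induced window graph, because variables in the earliest slices are confounded through pre-window variables; the proof of \cref{thm:window_Verma} notes exactly this. The right statement is that independences among subsets of $\sw$ coincide with d-separation in $\gts$, which is a compositional graphoid. That statement needs \cref{ass:faith} together with a \emph{global} Markov property for $\gts$. Only the local property (\cref{ass:localmarkov}) is assumed, and for infinite graphs it need not give the global one. So your fallback of assuming composition outright is likely what you will need. Intersection, by contrast, is already granted, since $P_\cW$ is assumed to be a graphoid.

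Second, for the lift, the far-past part works as you suggest. Apply \cref{ass:localmarkov} and \cref{ass:maxlag} along a topological order of the true contemporaneous graph and chain with contraction. This gives $\*X_t \ind F \mid \*X_{t-\tmax},\dots,\*X_{t-1}$ for any finite set $F$ of variables before time $t-\tmax$. The estimated parents together with the estimated window non-descendants contain the whole lagged block. Hence decomposition, weak union and contraction with the window Markov property screen $F$ off by the estimated parents. Non-descendants at future times also need the window Markov property at every time point, i.e.\ stationarity of the distribution and not only of the graph. After that, an ordered-Markov-implies-global argument on a large finite time block finishes the lift.
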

% \begin{proof}
% Suppose not, then the output ts-DAG $\widehat{\gts}$ of phase 1 of TS-BOSS is such that there exists an $X^j_{s}\in \nd(X^i_t)_{\widehat{\gts}}$ such that $X^i_t \nind X^j_{s} \ | \ \pa(X^i_t)_{\widehat{\gts}} $. From assumptions \ref{ass:localmarkov}, \ref{ass:maxlag} and \ref{ass:station}, it follows that there exists a that is a non-descendant in the window graph violates the window Markov property. W.l.o.g.~let this variable be $X^j_{s}$. Consider the last permutation in phase 1 before the permutation search terminates because no score improvement can be found. Local score consistency of $\cB(\cG,\*D)$, i.e.~\cref{ass:asymptotic},  implies that the ts-DAG that results from adding edge $X^j_{s} \to X^i_t$ when computing the permutation-induced graph, has a higher score than for a ts-DAG without this edge, therefore the first phase cannot terminate at $\widehat{\gts}$. 
% \end{proof}

Finally, the following result guarantess the asymptotic correctness of TS-BOSS.

\begin{lemma}
    Let assumptions \ref{ass:localmarkov}-\ref{ass:station} be satisfied for data $\*D$ and ts-DAG $\gts$ resulting from a ts-SCM over time series $(\*X_t)_{t\in \integers}$. 
    Let $\widehat{\cG}_{ts}$ be a ts-DAG that satisfies the time series local Markov property w.r.t.~$(\*X_t)_{t\in \integers}$. %corresponding to a multivariate time series that satisfies \cref{ass:localmarkov}.
    Then TS-BES with input graph $\widehat{\cG}_{ts}$, and a Bayesian scoring criterion $\cB(\cG,\*D)$ and time series data $\*D$ that satisfies \cref{ass:asymptotic}, returns the MEC of the true window causal graph $\gw$ of the ts-DAG $\gts$ in the large sample limit.
\end{lemma}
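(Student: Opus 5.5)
The proof follows Chickering's argument for the optimality of BES, adapted to the restricted (stationary) search space of TS-BES.

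\textbf{Step 1: reduction to the window graph.} By the remark after \cref{def:window_Markov}, a ts-DAG satisfies the time series local Markov property if and only if its window graph satisfies the window Markov property. So the input $\widehat{\cG}_{ts}$ corresponds to a stationary window graph $\widehat{\cG}_\cW$ that is an independence map of $P_\cW$. Likewise, the true window graph $\gw$ is a stationary window graph that is both Markov and, by \cref{ass:faith}, faithful to $P_\cW$. Since TS-BES only removes parents of contemporaneous nodes and re-applies \cref{def:RUprime}(ii), every graph it visits is stationary. It therefore suffices to work with stationary window graphs and the i.i.d.\ window data of \cref{ass:asymptotic}(ii).

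\textbf{Step 2: the Markov property is preserved along the search.} I would show that, in the large sample limit, every accepted step of TS-BES produces a graph that is still Markov to $P_\cW$. A TS-BES move deletes an edge $X^j_s\to X^i_t$ into the contemporaneous slice, together with its stationary copies, and applies the usual BES orientation of the neighbours $\*H$. By local consistency (\cref{ass:asymptotic}(i)) and decomposability, such a deletion increases the score only if $X^i_t\ind X^j_s\mid \pa(X^i_t)\cup \*H\setminus\{X^j_s\}$.

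The lagged copies need a separate justification. The local scores of the lagged nodes depend on data outside the window, so they are not scored directly. Instead, the corresponding independence at lagged times follows from \cref{ass:station}, because the joint law of a window shifted in time is the same. With this, Chickering's Lemma 10 and Theorem 15 (deletion operator validity) give that the new equivalence class is still an independence map.

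\textbf{Step 3: termination at the true equivalence class.} Suppose TS-BES stops at an equivalence class $\mathcal{E}$ whose window graphs are independence maps but which is not the class of $\gw$. By faithfulness, $\gw$ is a proper stationary subgraph, up to equivalence, of some member of $\mathcal{E}$. I would invoke Chickering's Theorem 4 (Meek's conjecture) to get a sequence of covered edge reversals and single-edge deletions leading from that member down to $\gw$. The key claim is that, because both endpoints are stationary, this sequence can be chosen so that each deletion removes an edge into the contemporaneous slice together with its stationary copies. The resulting graph is then a legal TS-BES neighbour, and local consistency ensures its score is strictly higher, since the removed dependence is absent under faithfulness. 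This contradicts termination. Hence the output is the class of $\gw$, which can be represented as a TS-CPDAG (Appendix~\ref{app:tscpdag}).

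\textbf{Main obstacle.} The hard step is the stationarity version of Meek's conjecture in Step 3: showing that a covered-reversal/deletion path exists within the class of stationary window graphs, rather than through nonstationary intermediate graphs. My first attempt would be to run Chickering's construction only on the contemporaneous slice, treating all lagged nodes as fixed exogenous parents, which is legitimate because lagged-to-contemporaneous edges can never be reversed. Lagged-to-contemporaneous edges that are absent from $\gw$ would then be deleted first, each such deletion being justified by the window Markov property of $\gw$. The stationary copies would be propagated afterwards via \cref{def:RUprime}(ii).
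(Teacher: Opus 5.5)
Your overall route is the paper's own. You use Chickering's BES soundness argument, apply Meek's conjecture to the edges into the contemporaneous slice, and take the last deletion in the sequence as an improving TS-BES neighbour. The paper simply asserts that restricting to those edges is harmless. The step you isolate as the main obstacle is therefore exactly the one it leaves unargued, but your sketch does not close it.

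First, Step 1 claims too much. $\gw$ and $\widehat{\cG}_\cW$ are in general \emph{not} independence maps of $P_\cW$, because lagged window nodes are confounded through nodes before $t-\tmax$. For example, with $\tmax=1$ and edges $X^1_{t-2}\to X^1_{t-1}$, $X^1_{t-2}\to X^2_{t-1}$ in $\gts$, the non-adjacent pair $X^1_{t-1},X^2_{t-1}$ of $\gw$ is dependent; the paper's proof of \cref{thm:window_Verma} notes the same effect. Only the window Markov property holds, so $\gw\le\widehat{\cG}_\cW$ does not follow from ``independence map plus faithfulness''. Chickering's Theorem 4 therefore cannot be applied to the raw window graphs.

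Second, deleting the absent lagged-to-contemporaneous edges first is not valid. Take true edges $X^1_{t-1}\to X^1_t\to X^2_t$ and the window-Markov graph $X^1_{t-1}\to X^1_t$, $X^1_{t-1}\to X^2_t$, $X^2_t\to X^1_t$. Removing $X^1_{t-1}\to X^2_t$ first creates the collider $X^1_{t-1}\to X^1_t\leftarrow X^2_t$, which asserts the false statement $X^2_t\ind X^1_{t-1}$. The covered reversal of $X^2_t\to X^1_t$ must come first. Whether a deletion is admissible depends on the intermediate graph, not on the window Markov property of $\gw$.

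Your idea of treating the lagged nodes as fixed exogenous parents can be made rigorous. Replace the lagged part of every admissible window graph $G$ by one fixed complete, time-respecting DAG on the lagged nodes, keeping all edges into $\*X_t$; call the result $G^+$.

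(a) $P_\cW$ is Markov to $\gw^+$ by Theorem 2 of \cite{verma_pearl_networks}: the lagged local statements are vacuous, and the contemporaneous ones are the window Markov property.

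(b) $\gw^+\le\widehat{\cG}_\cW^+$. Each ordered local statement of $\widehat{\cG}_\cW^+$ at a node $X^i_t$ holds in $P_\cW$, hence is a d-separation in $\gts$ by \cref{ass:faith}. Take a d-connecting path in $\gw^+$ and cut it at its first predecessor outside the conditioning set. It cannot have a lagged interior node: that node would be a conditioned collider whose path-neighbours are lagged parents, at least one of them an interior conditioned non-collider. So the path is a d-connecting path of $\gts$, a contradiction. The same Verma--Pearl theorem then lifts these local statements to all d-separations of $\widehat{\cG}_\cW^+$.

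(c) Run Chickering's \textsc{Apply-Edge-Operation}, always choosing a contemporaneous sink of $H$ in its third step. Every reversal then lies inside $\*X_t$ and every addition points into $\*X_t$, so all intermediate graphs are augmented admissible graphs. Since TS-BOSS scores only contemporaneous nodes, its score is the BIC of the augmented graph up to a constant. Hence score equivalence holds and stationary copies never need to be tracked, which makes your lagged-copy argument in Step 2 unnecessary. The graph just before the last addition is an independence map by (a), and local consistency makes it the improving TS-BES neighbour.
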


\section{Simulation Study}\label{sec:experiments}
% We assess performance via controlled simulations, detailing the data generation, methods, evaluation metrics and experimental setup. 

\subsection{Data Generation}

Synthetic data are generated from multivariate linear time-series structural causal models (SCMs). Each variable follows a structural equation of the form
\[
X^j_t := \sum_{X^i_{t-\tau}\in \mathrm{Pa}(X^j_t)} a_{ij}^{(\tau)} X^i_{t-\tau}
+ \varepsilon^j_t,
\]
where \(a_{ij}^{(\tau)} \in \mathbb{R}\) and \(\varepsilon^j_t\) are mutually independent Gaussian noise terms with zero mean.
%Causal effects act forward in time (\(\tau \ge 1\)), while t
The contemporaneous graph (induced graph for nodes with \(\tau = 0\)) is acyclic. Self-links at positive lags permit temporal dependence.
Non-stationary parameterizations are discarded. A burn-in period is simulated and removed to ensure convergence to the stationary regime before collecting the final sample.

\subsection{Models}

We compare the following methods: (i) \textbf{TS-BOSS}: the proposed permutation-based score method adapted to time series by enforcing temporal ordering constraints ; (ii) \textbf{TS-BOSS (i.i.d.)}: TS-BOSS applied to independently sampled trajectories obtained by generating multiple realizations of the same time-series SCM and extracting one window of length corresponding to \(\tau_{\max}\) from each realization, ensuring independence across samples and isolating the effect of temporal dependence. (iii) \textbf{PCMCI+} \cite{runge2020discovering}: a constraint-based method for time-series causal discovery based on conditional independence testing, included for comparative evaluation. For TS-BOSS methods, we present simulations for phase 1, as in \cite{BOSS}.

% \begin{itemize}
%     \item \textbf{TS-BOSS}: the proposed permutation-based score method adapted to time series by enforcing temporal ordering constraints.

%     \item \textbf{TS-BOSS (i.i.d.)}: TS-BOSS applied to independently sampled trajectories obtained by generating multiple realizations of the same time-series SCM and extracting one window of length corresponding to \(\tau_{\max}\) from each realization, ensuring independence across samples and isolating the effect of temporal dependence.

%     \item \textbf{PCMCI+} \cite{runge2020discovering}: a constraint-based method for time-series causal discovery based on conditional independence testing, included for comparative evaluation.
% \end{itemize}

\subsection{Evaluation Metrics}

We report adjacency precision and recall, orientation precision and recall, and runtime. Estimated graphs are compared to the ground-truth CPDAG. The evaluation metrics for adjacency and orientation extend the protocol of \cite{BOSS} are provided in \cref{tab:adj_ori_metrics}. 
\begin{table}[t]
\centering

\begin{minipage}{0.42\textwidth}
\centering
\[
\text{Precision} = \frac{TP}{TP + FP}
\]

\[
\text{Recall} = \frac{TP}{TP + FN}
\]
\end{minipage}
\hspace{0.03\textwidth}
\begin{minipage}{0.53\textwidth}
\centering
\small
\setlength{\tabcolsep}{3pt}
\begin{tabular}{c c c c}
\hline
True graph & Estimated graph & Adjacency & Orientation \\
\hline
$a \leftarrow b$ & $a \leftarrow b$ & TP & TP,TN \\
$a \leftarrow b$ & $a \rightarrow b$ & TP & FP,FN \\
$a \leftarrow b$ & $a \circ\!-\!\circ b$ & TP & FN \\
$a \leftarrow b$ & $a \dots b$ & FN & FN \\
\hline
$a \circ\!-\!\circ b$ & $a \leftarrow b$ & TP & FP \\
$a \circ\!-\!\circ b$ & $a \rightarrow b$ & TP & FP \\
$a \circ\!-\!\circ b$ & $a \circ\!-\!\circ b$ & TP & TN \\
$a \circ\!-\!\circ b$ & $a \dots b$ & FN & TN \\
\hline
$a \dots b$ & $a \leftarrow b$ & FP & FP \\
$a \dots b$ & $a \rightarrow b$ & FP & FP \\
$a \dots b$ & $a \circ\!-\!\circ b$ & FP & -- \\
$a \dots b$ & $a \dots b$ & TN & -- \\
\hline
\end{tabular}

\caption{Adjacency and orientation evaluation metrics. Orientation is evaluated only for contemporaneous edges.}
\label{tab:adj_ori_metrics}
\end{minipage}

\end{table}
Further details on the evaluation protocol and an extended explanation of Table~\ref{tab:adj_ori_metrics} are provided in Appendix~\ref{sec:appendix_metrics}.

\subsection{Experimental Setup}

Default values for the data generation process and model parameters are given in Table~\ref{tab:default_params}. Unless otherwise specified, all experiments use this configuration and results are averaged over $K$ randomly generated graphs per setting. 
\begin{table}[h]
\centering
\small
\begin{tabular}{l l}
\hline
Parameter & Default value \\
\hline
Number of variables (nodes) $N$ & 5 \\
Sample size $T$ & 1000 \\
Number of graphs per setting $K$ & 100 \\
Coefficient distribution & discrete in $[-0.5,0.5]$ \\
Graph density $d$  & 1.5 \\
Number of links $L$ & $\lfloor d \times N \rfloor$ \\
$\%$ of contemporaneous links & 0.3 \\
Autocorrelation parameter $a$ & 0.3 \\
Autocorrelation sampling & $U([\min(0.1, a-0.3),\, a])$ \\
Maximum true time lag $\tau_{\max}$ & 3 \\
Maximum time lag (models) $lag_{\max}$ & $\tau_{\max}$ \\
Transient (burn-in) length &  $0.2 \times T$ \\
PCMCI+ significance level & $\alpha = 0.01$ \\
PCMCI+ contemporaneous rule & Majority \\
\hline
\end{tabular}
\caption{Default experimental parameters used throughout the simulation study.}
\label{tab:default_params}
\end{table}

\subsection{Results}
\label{subsec:results}

Figure~\ref{fig:main_results} summarizes the experimental results under the different parameter settings considered in this study.
\begin{figure}[h]
\centering

\subfigure[Effect of varying sample size ($T$).]{%
    \includegraphics[width=0.48\linewidth]{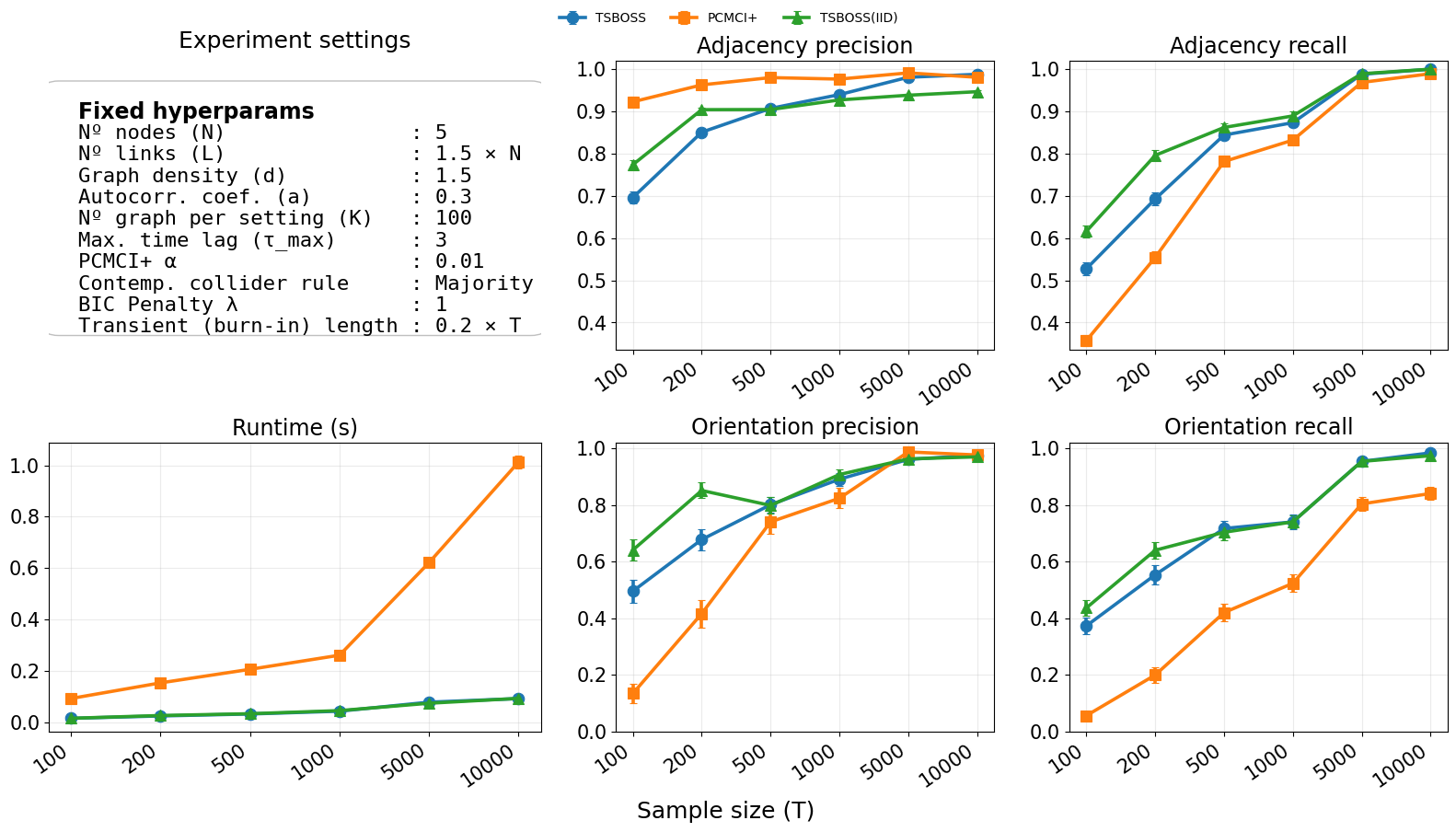}
    \label{fig:exp_samples}}
\hfill
\subfigure[Effect of varying graph density ($d$).]{%
    \includegraphics[width=0.48\linewidth]{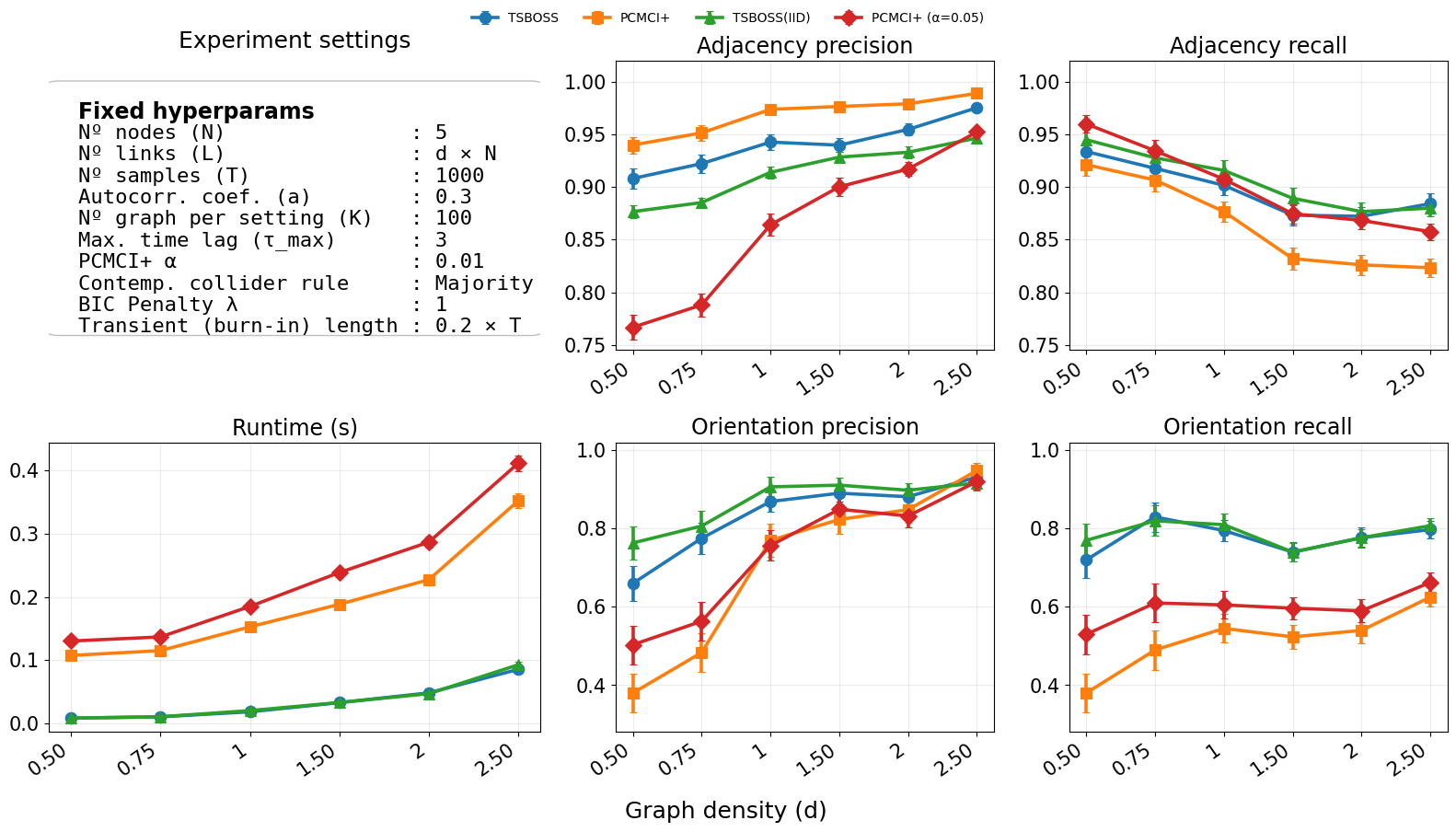}
    \label{fig:exp_density}}

\medskip

\subfigure[Effect of varying number of nodes ($N$).]{%
    \includegraphics[width=0.48\linewidth]{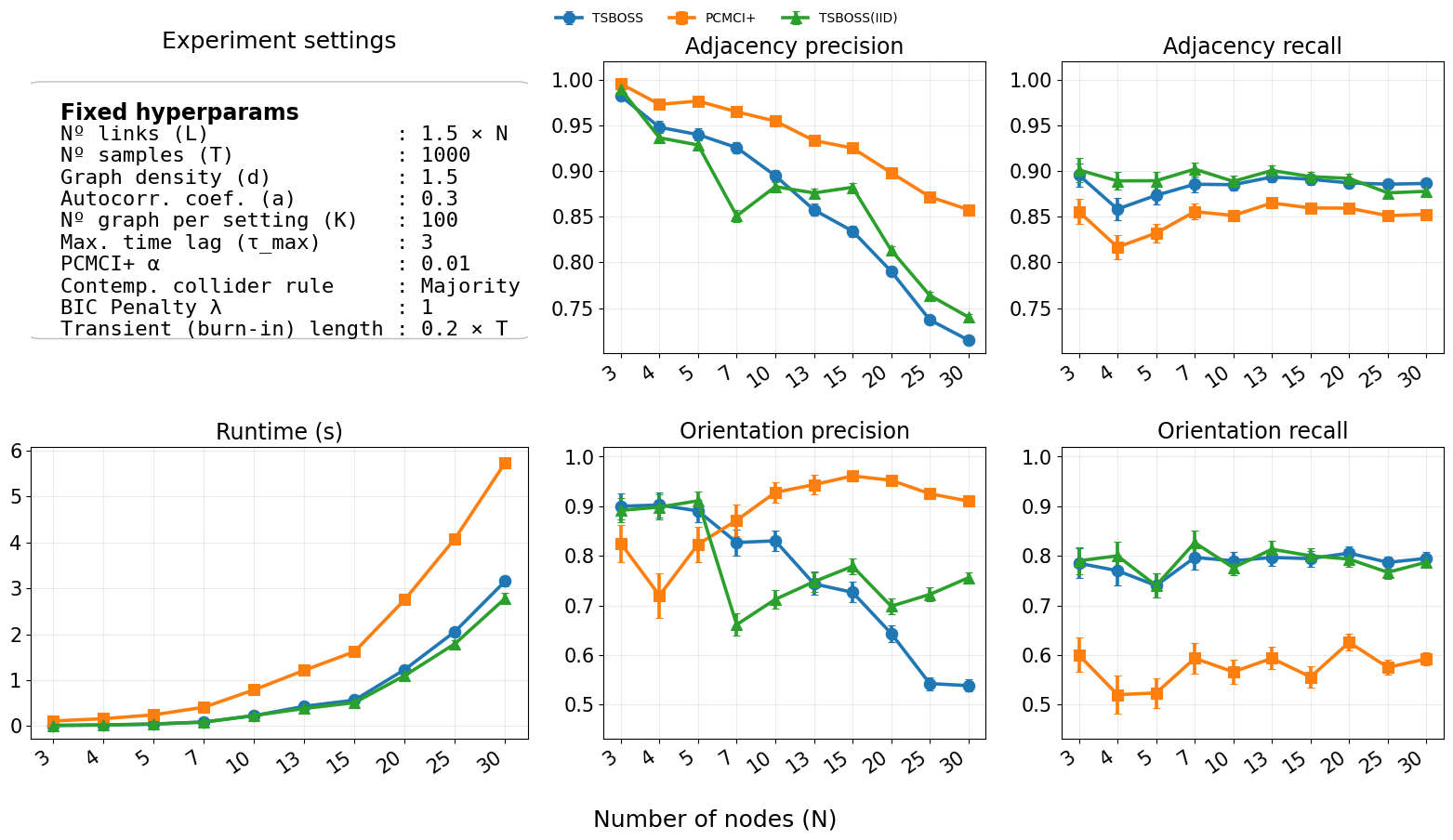}
    \label{fig:exp_nodes}}
\hfill
\subfigure[Effect of varying autocorrelation ($a$).]{%
    \includegraphics[width=0.48\linewidth]{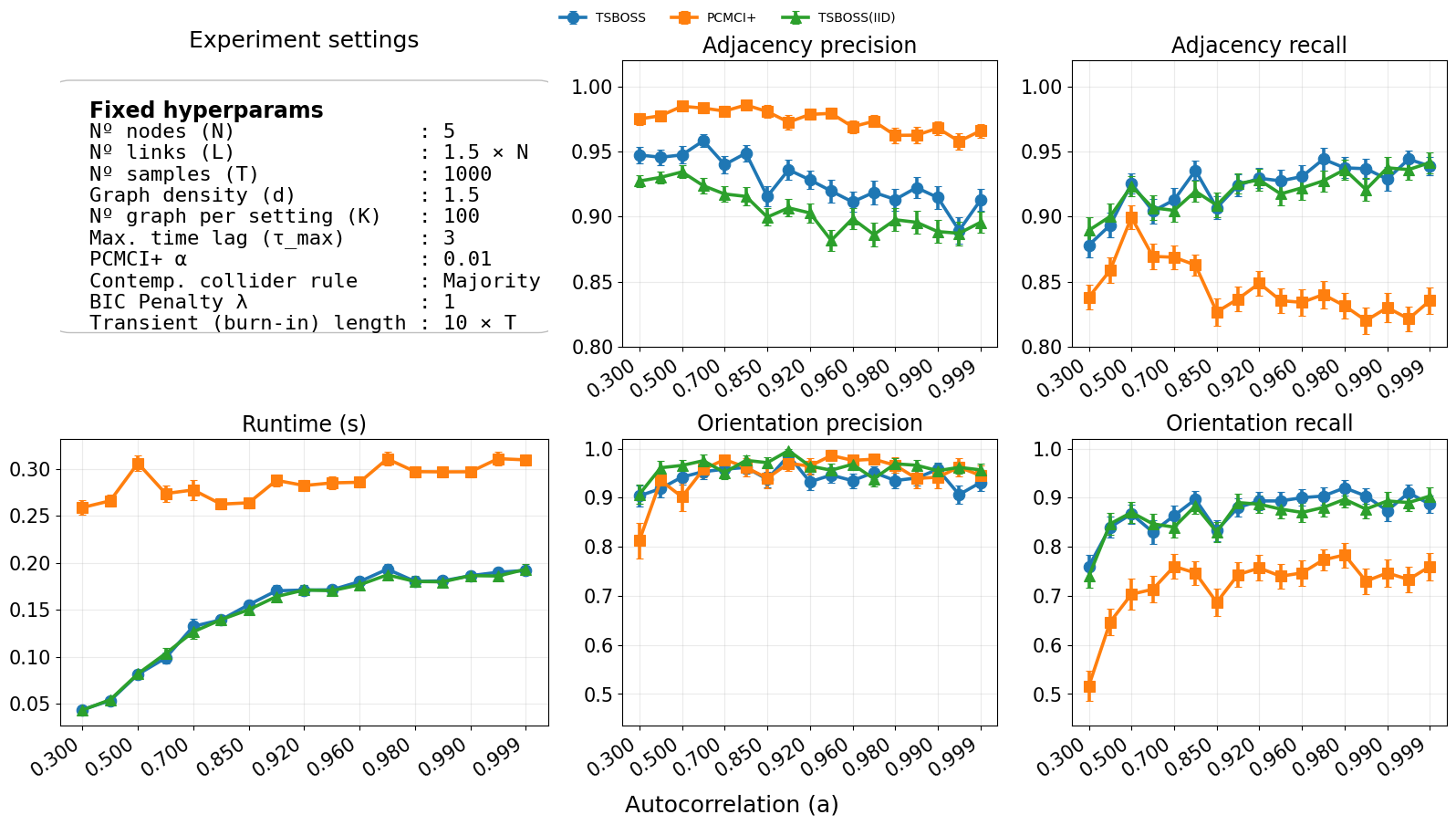}
    \label{fig:exp_auto}}

\caption{Experimental results for TS-BOSS, TS-BOSS (i.i.d.), and PCMCI+ under varying parameter settings.}
\label{fig:main_results}

\end{figure}

In Figure~\ref{fig:exp_samples}, increasing the sample size $T$ improves recall and orientation metrics for all methods, as expected due to the larger amount of available information. TS-BOSS achieves consistently higher adjacency recall than PCMCI+, whereas PCMCI+ attains higher adjacency precision. Orientation recall is also consistently higher for TS-BOSS, while orientation precision for PCMCI+ approaches that of TS-BOSS at larger sample sizes. In terms of runtime, TS-BOSS remains substantially faster than PCMCI+ across all values of $T$ and appears only mildly affected by increasing sample size.

Figure~\ref{fig:exp_density} shows the effect of increasing the graph density $d$. For this experiment, PCMCI+ was also run with a significance level of $\alpha = 0.05$. As $d$ increases, adjacency and orientation precision improve for all methods, while adjacency recall decreases, reflecting the increasing difficulty of the recovery problem in denser graphs. Running PCMCI+ with $\alpha=0.05$ trades precision for recall, but still does not outperform TS-BOSS in adjacency recall. Runtime scales with density, and PCMCI+ is consistently slower.

As shown in Figure~\ref{fig:exp_nodes}, increasing the number of nodes $N$ leads to higher runtime for all methods. For TS-BOSS, this is due to the growth of the permutation search space, while PCMCI+ exhibits an even steeper increase, driven by the growing number of conditional independence tests. As graph size increases, adjacency precision declines. For orientation precision, PCMCI+ outperforms TS-BOSS from $N \geq 7$. In contrast, adjacency and orientation recall remain largely stable across graph sizes. Overall, TS-BOSS maintains higher recall across all $N$.

Finally, Figure~\ref{fig:exp_auto} illustrates the effect of increasing the autocorrelation parameter $a$. In this experiment, the transient length is increased to $10 \times T$ to ensure stationarity. PCMCI+ attains higher adjacency precision, but its adjacency recall decreases with stronger autocorrelation, whereas TS-BOSS remains stable and consistently achieves higher recall. Orientation precision shows only minor variation across methods and remains consistently high, without a clear dominance of any method. In contrast, orientation recall remains stable as $a$ increases, with TS-BOSS consistently attaining the highest recall.

% \begin{figure}[t]
% \centering

% % ---- First row ----
% \begin{minipage}{0.49\linewidth}
%     \centering
%     \includegraphics[width=\linewidth]{figures/experiment_samplesize_v1.png}
%     \label{fig:exp_samples}
    
%     \small (a) Effect of varying sample size ($T$)
% \end{minipage}
% \hfill
% \begin{minipage}{0.49\linewidth}
%     \centering
%     \includegraphics[width=\linewidth]{figures/experiment_graphdensity_v1.png}
%     \label{fig:exp_density}
    
%     \small (b) Effect of varying graph density ($d$)
% \end{minipage}

% \vspace{0.4cm}

% % ---- Second row ----
% \begin{minipage}{0.49\linewidth}
%     \centering
%     \includegraphics[width=\linewidth]{figures/experiment_numnodes_v1.png}
%     \label{fig:exp_nodes}
    
%     \small (c) Effect of varying number of nodes ($N$)
% \end{minipage}
% \hfill
% \begin{minipage}{0.49\linewidth}
%     \centering
%     \includegraphics[width=\linewidth]{figures/experiment_autocorrelation_v1.png}
%     \label{fig:exp_auto}
    
%     \small (d) Effect of varying autocorrelation ($a$)
% \end{minipage}

% \caption{Experimental results for TS-BOSS, TS-BOSS (i.i.d.), and PCMCI+ under varying parameter settings.
% }
% \label{fig:main_results}

% \end{figure}

\section{Discussion and Outlook}\label{sec:outlook}
We present TS-BOSS, an extension of score-based structure learning method BOSS to multivariate time series. We show that TS-BOSS outperforms the constraint-based baseline method PCMCI+ in the high autocorrelation regime. This suggests that permutation-based score search with efficient caching can remain effective when strong sample dependence degrades the behavior of conditional independence testing.
We presented theoretical guarantees of TS-BOSS under the i.i.d.~window data setting, while illustrating with simulations that non-i.i.d.~data from sliding windows over a single multivariate time series does not hamper TS-BOSS performance. 

An important advantage of constraint-based causal discovery is that it can more directly accommodate violations of orientation faithfulness \citep{Ramsey_adjacency_faithfulness}. %, since orientation rules can be adapted to reflect ambiguous or unstable conditional independence patterns.
Score-based methods, by contrast, may be less transparent in how such violations affect the selected graph. %, although they benefit from global optimization criteria that can improve robustness in other regimes.
On the other hand, permutation-based search appears more amenable to parallelization: scoring candidate parent sets and evaluating local changes can be distributed across nodes in a permutation, which may offer additional scalability benefits for large time series systems.

More broadly, the constraint-based and score-based paradigms impose different methodological bottlenecks in single multivariate time series settings. Constraint-based approaches rely on well-calibrated conditional independence tests under temporal dependence of samples, while score-based approaches require theoretical guarantees—e.g., score consistency—tailored to dynamic Bayesian network structure learning. Each set of requirements is nontrivial in practice and motivates further theoretical work. Our ablation studies provide evidence that increasing sample dependence differentially influences the performance of competing approaches and underline the need for further theoretical guarantees in the single multivariate time series setting. 

%And outlook paragraph?

% \begin{itemize}
%     \item  we presented TS-BOSS, beats SOTA constraint-based methods for time series causal discovery in the high autocorrelation setting
%     \item Constraint vs score based for time series, theoretical results needed for score consistency versus well-calibrated conditional independence tests needed for constraint based CD
%     \item Constraint-based CD can more straightforwardly incorporate violations of orientation faithfulness
%     \item Score-based more parallelizable?
% \end{itemize}

% Acknowledgments---Will not appear in anonymized version
% \acks{We thank ?? for helpful comments.}

\bibliography{references}

\newpage
\appendix

\section{Proofs}\label{app:proofs}
In this section, we present proofs for theoretical results presented in \cref{sec:theory}.
\paragraph{Theorem 5  (Permutation-induced window graph minimality)}
    Let $P_\cW$ be a graphoid over $\sw$ and $\gw^\pi$ be a window graph induced from an admissible permutation $\pi$. Then $\gw^\pi$ satisfies the window Markov property and is window subgraph minimal. 

\begin{proof}
    For the permutation $\pi$ one can construct the induced DAG following the strategy \emph{RU} defined in \cite{lam_grasp} and motivated from \cite{raskutti_sp}, without distinguishing between time indices. It follows that the graph $\cG_{RU} = RU(\pi)$ is such that $\gw^\pi \subseteq \cG_{RU}$, where there may be additional edges in $\cG_{RU}$ due to common confounders of window variables $\*S_\cW$ lying outside the window. Additionally, for each $i$ and the maximal time index $t$ in $\pi$, $\pa(X^i_t)_{\cG_{RU}} = \pa(X^i_t)_{\gw^\pi}$, because the edges incident into the contemporaneous slice have been found using the same strategy in both methods. Further $\nd(X^i_t)_{\cG_{RU}} = \nd(X^i_t)_{\gw^\pi}$ because the permutation $\pi$ was admissible and thus all variables preceding the variables in the contemporaneous slice cannot be their descendants. From Theorem 2 in \cite{verma_pearl_networks}, it follows that $\cG_{RU}$ satisfies $X^i_t \ind \nd(X^i_t)_{\cG_{RU}} \ | \ \pa(X^i_t)_{\cG_{RU}} \Rightarrow X^i_t \ind \nd(X^i_t)_{\gw^\pi} \ | \ \pa(X^i_t)_{\gw^\pi}$. Thus, $\gw^\pi$ satisfies window Markov property. 

    For the subgraph minimality proof, assume the contrary, namely that there exists a stationary subgraph $\hw^\pi \subset \gw^\pi$ that satisfies the window Markov property. Stationarity of the graph implies that $\pa(X^i_t)_{\hw^\pi} \subset \pa(X^i_t)_{\gw^\pi}$. Let $\*D^i_t := \pa(X^i_t)_{\gw^\pi} \setminus \pa(X^i_t)_{\hw^\pi}$. For all $X^j_{t'} \in  \*D^i_t$, $X^j_{t'} \in \nd(X^i_t)_{\hw^\pi}$ because if it were a descendant of $X^i_t$ in $\hw^\pi$, it would also be a descendant of $X^i_t$ in $\gw^\pi$ which would violate acyclicity. From window Markovianity of $\hw^\pi$ we have $\*D^i_t \ind X^i_t \ | \ \pa(X^i_t)_{\hw^\pi}$. By the weak union property of the graphoid axioms we have $X^j_{t'} \ind X^i_t \ | \ (\pa(X^i_t)_{\hw^\pi} \cup (\*D^i_t \setminus X^j_{t'}))$ which is equivalent to $X^j_{t'} \ind X^i_t \ | \ (\pa(X^i_t)_{\gw^\pi}  \setminus X^j_{t'})$. However, this would contradict condition (i) in definition \ref{def:RUprime}. Therefore, $\gw^\pi$ must be window subgraph minimal.  
\end{proof}

\paragraph{Lemma 6}
    %Let $\cB(\cG,\*D)$ be a locally consistent Bayesian scoring criterion for DAG $\cG$ and i.i.d.~window data $\*D$ over the vertices $\sw$.
    Let assumptions \ref{ass:localmarkov}-\ref{ass:asymptotic} be satisfied for data $\*D$ and ts-DAG $\gts$ resulting from a ts-SCM over time series $(\*X_t)_{t\in \integers}$, and a Bayesian scoring criterion $\cB(\cG,\*D)$ over DAGs.
    Then the output graph of phase 1 of TS-BOSS over data $\*D$ with score  $\cB(\cG,\*D)$ satisfies time series local Markov property w.r.t.~$(\*X_t)_{t\in \integers}$ and is subgraph minimal in the large sample limit.
    % Then thw phase 1 of TS-BOSS returns the MEC of the true window causal graph $\gw$ in the large sample limit.

\begin{proof}
Suppose not, then the output ts-DAG $\widehat{\gts}$ of phase 1 of TS-BOSS is such that there exists an $X^j_{s}\in \nd(X^i_t)_{\widehat{\gts}}$ such that $X^i_t \nind X^j_{s} \ | \ \pa(X^i_t)_{\widehat{\gts}} $. From assumptions \ref{ass:localmarkov}, \ref{ass:maxlag} and \ref{ass:station}, it follows that there exists a that is a non-descendant in the window graph violates the window Markov property. W.l.o.g.~let this variable be $X^j_{s}$. Consider the last permutation in phase 1 before the permutation search terminates because no score improvement can be found. Local score consistency of $\cB(\cG,\*D)$, i.e.~\cref{ass:asymptotic},  implies that the ts-DAG that results from adding edge $X^j_{s} \to X^i_t$ when computing the permutation-induced graph, has a higher score than for a ts-DAG without this edge, therefore the first phase cannot terminate at $\widehat{\gts}$. 
% \begin{itemize}
%     \item[-] RU' yields a window causal graph $\widehat{G}$ from a permutation $\pi$. Let $\widehat{G}_{A}$ be the projected ADMG from the infinite ts-DAG corresponding to $\widehat{G}$. Consider graphoid $P_\cW$ that is the marginal distribution over $\sw$. 
%     Then, every contemp-contemp and contemp-lagged d-separation in the finite
% \end{itemize}
%     Idea: Prove TS-BOSS phase 1 soundness by defining modified permutation induced graph and using Verma Pearl 
%     (Modified permutation yields window graph. Window graph yields entire ts-DAG. From ts-DAG, project to any finite subset and get projected ADMG. Correspondingly consider the marginal distribution over the projected nodes which is a graphoid. Extend Verma-Pearl, i.e. Markovianity and SGS-minimality, for the projected ADMG and projected distribution. Assuming ts-Markov global property will be needed.)
\end{proof}

\paragraph{Lemma 7}
    Let assumptions \ref{ass:localmarkov}-\ref{ass:station} be satisfied for data $\*D$ and ts-DAG $\gts$ resulting from a ts-SCM over time series $(\*X_t)_{t\in \integers}$. 
    Let $\widehat{\cG}_{ts}$ be a ts-DAG that satisfies the time series local Markov property w.r.t.~$(\*X_t)_{t\in \integers}$. %corresponding to a multivariate time series that satisfies \cref{ass:localmarkov}.
    Then TS-BES with input graph $\widehat{\cG}_{ts}$, and a Bayesian scoring criterion $\cB(\cG,\*D)$ and time series data $\*D$ that satisfies \cref{ass:asymptotic}, returns the MEC of the true window causal graph $\gw$ of the ts-DAG $\gts$ in the large sample limit.

\begin{proof}
    %Idea: Prove TS-BES by focusing on induced graph on contemp nodes and lagged to contemp nodes.
    TS-BES takes as input the graph $\widehat{\cG}_{ts}$. Let $\widehat{\gw}$ be the corresponding window graph and $\gw$ be the true window graph. Suppose $\gw \subset \widehat{\gw}$. For both these graphs consider only the edges incident into the contemporaneous slice because TS-BES attempts deletion of these edges only by \cref{ass:maxlag} and \ref{ass:station}. Following the proof of soundness of BES in \cite{Chickering_ges}, we note that Theorem 4 therein, namely the Meek conjecture, implies that the the true $\gw$ can be obtained from the estimated $\widehat{\gw}$ with a sequence of edge deletions and covered edge reversals. Consider the last graph $\cH$ in the sequence from $\gw$ to $\widehat{\gw}$ which has fewer edges than $\widehat{\gw}$. By \cref{ass:asymptotic}, $\cH$ has a higher score than $\widehat{\gw}$, thus TS-BES cannot terminate at $\widehat{\gw}$.
\end{proof}
\section{Construction of the TS-CPDAG from a TS-DAG} \label{app:tscpdag}

Given a TS-DAG, the corresponding Time Series CPDAG (TS-CPDAG) is constructed by first rendering all edges unoriented. The orientation of all lagged edges is then fixed, since temporal order uniquely determines their direction. Next, we orient all colliders, including both contemporaneous colliders and mixed-time colliders. A mixed-time collider refers to a v-structure of the form
\[
X_{t-\tau} \rightarrow Y_t \leftarrow Z_t,
\]
where one parent is lagged and the other contemporaneous.

We then apply Meek’s rule R1 \cite{meek2013causalinferencecausalexplanation} to unshielded triples involving both lagged and contemporaneous edges, preventing the introduction of additional mixed-time colliders. Finally, the Meek rules R1--R3 \cite{meek2013causalinferencecausalexplanation} are applied to the contemporaneous subgraph until no further orientations are possible.
\section{Evaluation Metrics}
\label{sec:appendix_metrics}

In the original BOSS paper \cite{BOSS}, performance metrics are computed by comparing the estimated graph to the ground-truth DAG. As the estimated graph is a TS-CPDAG, the true TS-DAG is first converted into the corresponding TS-CPDAG (Appendix~\ref{app:tscpdag}) and all metrics are computed by comparing CPDAGs.

To enable evaluation at the TS-CPDAG level, the adjacency and orientation metrics introduced in \cite{BOSS} are extended to account for undirected edges as follows.

\paragraph{Adjacency Evaluation.}

Adjacency precision and recall are computed by checking the existence of an edge between two nodes, independently of its orientation. Since adjacency evaluation only tests whether an edge exists, the presence of undirected edges in the CPDAG does not alter the definition relative to the DAG-based evaluation in \cite{BOSS}. The extension to CPDAGs is therefore immediate.

\paragraph{Orientation Evaluation.}

For orientation, we adopt the arrowhead-based criterion introduced in \cite{BOSS}. For each ordered pair $(a,b)$ and lag $\tau=0$, orientation is evaluated locally by checking whether the edge between $a$ and $b$ contains an arrowhead into node $a$. The arrowhead check is performed separately for each node of every edge.

Let $\mathbb{I}_{\rightarrow a}(b)$ denote the indicator function defined as
\[
\mathbb{I}_{\rightarrow a}(b) =
\begin{cases}
1 & \text{if the edge between } a \text{ and } b \text{ contains an arrowhead into } a, \\
0 & \text{otherwise.}
\end{cases}
\]

Orientation counts are then defined as:

\begin{itemize}
    \item TP: $\mathbb{I}_{\rightarrow a}^{\text{true}}(b)=1$ and $\mathbb{I}_{\rightarrow a}^{\text{est}}(b)=1$,
    \item FP: $\mathbb{I}_{\rightarrow a}^{\text{true}}(b)=0$ and $\mathbb{I}_{\rightarrow a}^{\text{est}}(b)=1$,
    \item FN: $\mathbb{I}_{\rightarrow a}^{\text{true}}(b)=1$ and $\mathbb{I}_{\rightarrow a}^{\text{est}}(b)=0$,
    \item TN: $\mathbb{I}_{\rightarrow a}^{\text{true}}(b)=0$ and $\mathbb{I}_{\rightarrow a}^{\text{est}}(b)=0$.
\end{itemize}

Compared to \cite{BOSS}, the evaluation must be extended to handle undirected edges in the true CPDAG. If the true graph contains an undirected edge $a \circ\!\!-\!\!\circ b$, then $\mathbb{I}_{\rightarrow a}^{\text{true}}(b)=0$, since no arrowhead is present. Consequently, predicting no arrowhead into $a$ is counted as correct.

Table~\ref{tab:orientation_metrics} extends the comparison presented in Table~\ref{tab:adj_ori_metrics} to provide the full orientation evaluation from the perspective of node $a$.

\begin{table}[h]
\centering
\begin{tabular}{c c c c c}
\hline
True edge & Estimated edge & $\mathbb{I}_{\rightarrow a}^{\text{true}}(b)$ & $\mathbb{I}_{\rightarrow a}^{\text{est}}(b)$ & Orientation Metric \\
\hline
$a \leftarrow b$ & $a \leftarrow b$ & 1 & 1 & TP \\
$a \leftarrow b$ & $a \rightarrow b$ & 1 & 0 & FN \\
$a \leftarrow b$ & $a \circ\!\!-\!\!\circ b$ & 1 & 0 & FN \\
$a \leftarrow b$ & $a \dots b$ & 1 & 0 & FN \\
\hline
$a \rightarrow b$ & $a \leftarrow b$ & 0 & 1 & FP \\
$a \rightarrow b$ & $a \rightarrow b$ & 0 & 0 & TN \\
$a \rightarrow b$ & $a \circ\!\!-\!\!\circ b$ & 0 & 0 & TN \\
$a \rightarrow b$ & $a \dots b$ & 0 & 0 & TN \\
\hline
$a \circ\!\!-\!\!\circ b$ & $a \leftarrow b$ & 0 & 1 & FP \\
$a \circ\!\!-\!\!\circ b$ & $a \rightarrow b$ & 0 & 0 & TN \\
$a \circ\!\!-\!\!\circ b$ & $a \circ\!\!-\!\!\circ b$ & 0 & 0 & TN \\
$a \circ\!\!-\!\!\circ b$ & $a \dots b$ & 0 & 0 & TN \\
\hline
$a \dots b$ & $a \leftarrow b$ & 0 & 1 & FP \\
$a \dots b$ & $a \rightarrow b$ & 0 & 1 & FP \\
$a \dots b$ & $a \circ\!\!-\!\!\circ b$ & 0 & 0 & - \\
$a \dots b$ & $a \dots b$ & 0 & 0 & - \\
\hline
\end{tabular}

\caption{Orientation evaluation (arrowhead into node $a$) for TS-CPDAG comparison.}
\label{tab:orientation_metrics}
\end{table}

Orientation evaluation is restricted to contemporaneous edges ($\tau=0$), since lagged edges are uniquely oriented by temporal order.
\section{Further Simulations}

To mirror the evaluation protocol used in \citet{BOSS}, we additionally report results comparing the true data-generating TS-DAG with the TS-CPDAG returned by each method. All simulation settings, data generation procedures, and evaluation metrics remain identical to those described in Section~\ref{subsec:results}. The results are shown in Figure~\ref{fig:results_dag}.

\begin{figure}[H]
\centering

\subfigure[Effect of varying sample size ($T$).]{%
    \includegraphics[width=0.48\linewidth]{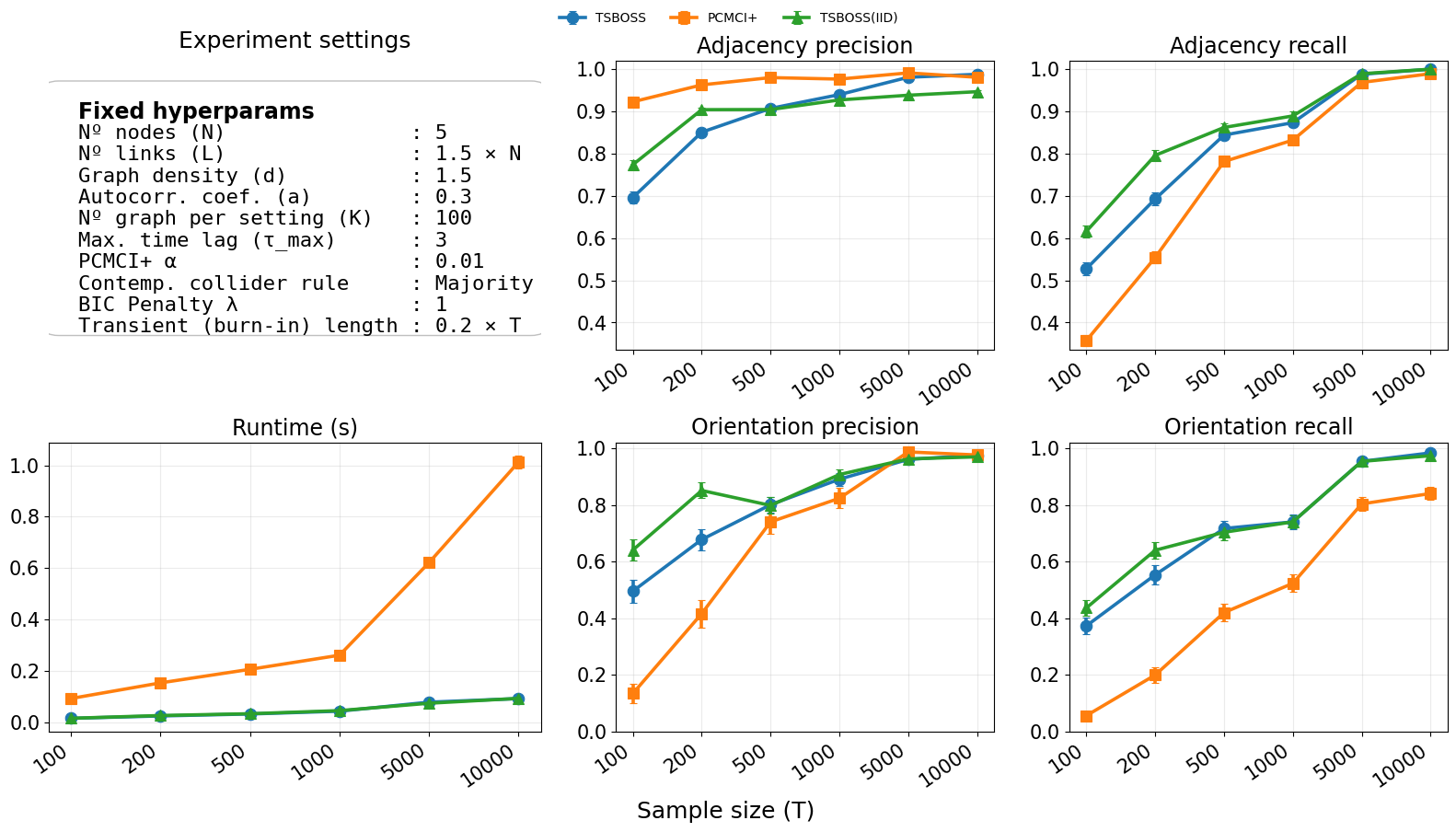}
    \label{fig:exp_samples_dag}}
\hfill
\subfigure[Effect of varying graph density ($d$).]{%
    \includegraphics[width=0.48\linewidth]{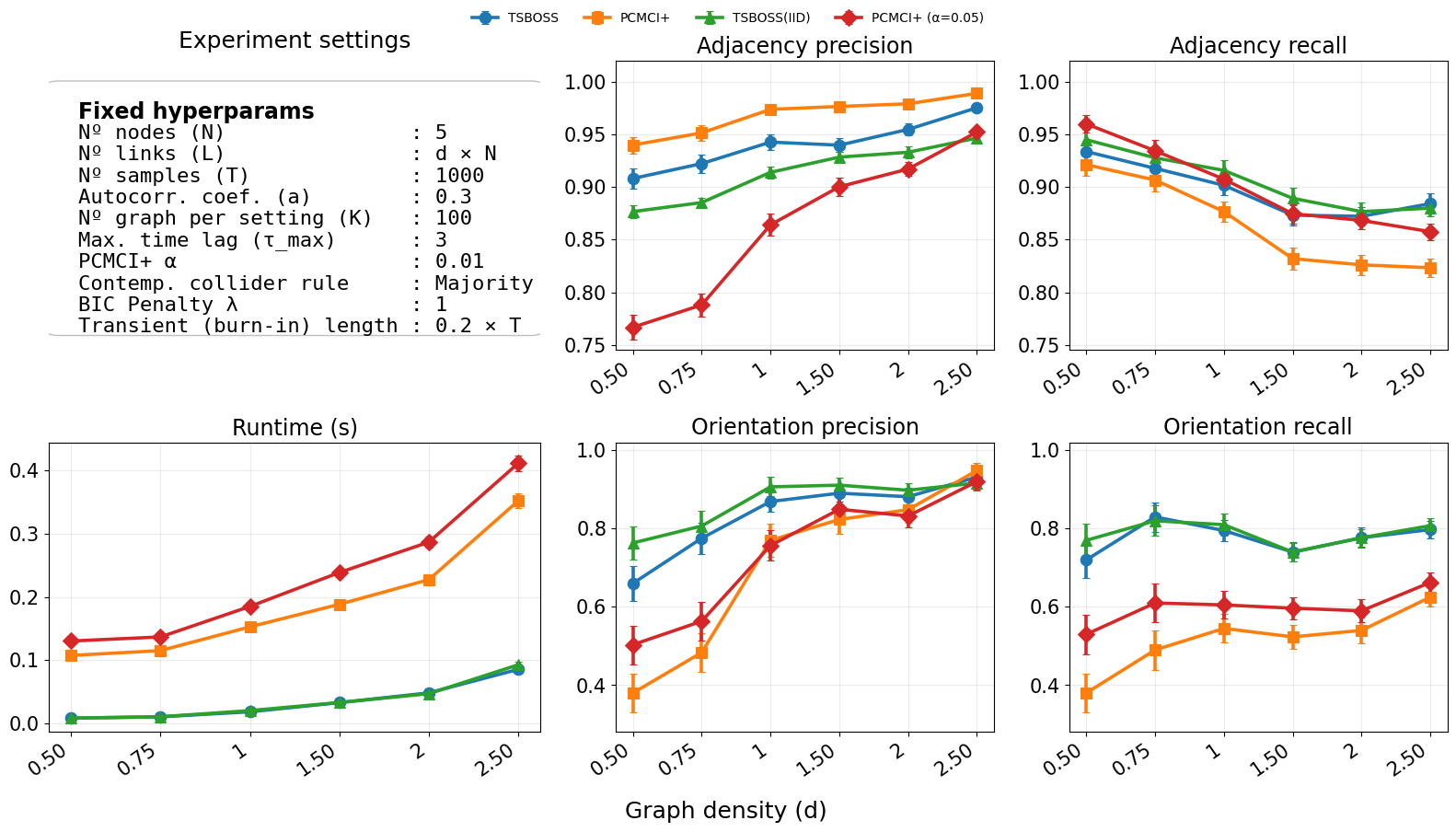}
    \label{fig:exp_density_dag}}

\medskip

\subfigure[Effect of varying number of nodes ($N$).]{%
    \includegraphics[width=0.48\linewidth]{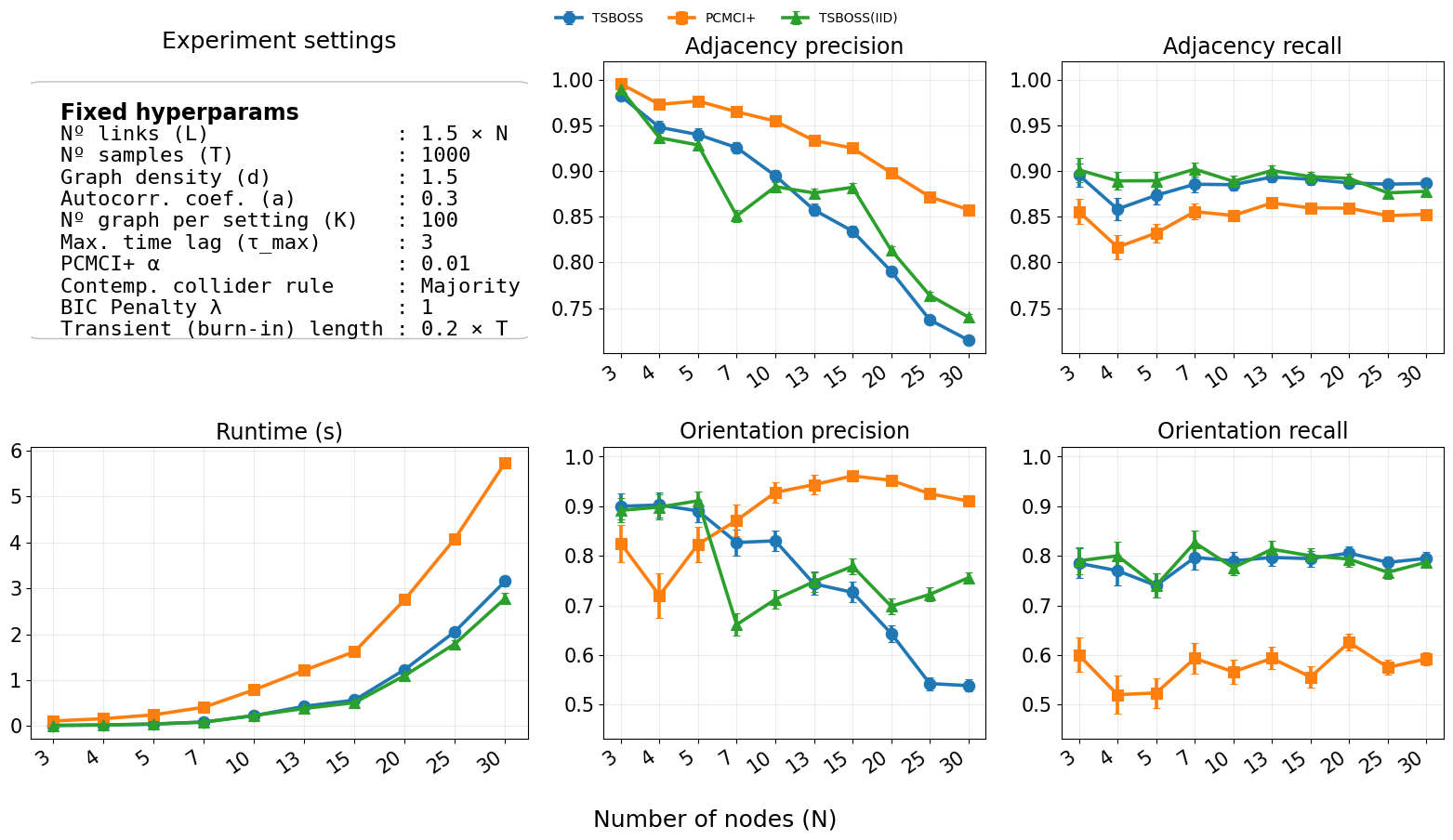}
    \label{fig:exp_nodes_dag}}
\hfill
\subfigure[Effect of varying autocorrelation ($a$).]{%
    \includegraphics[width=0.48\linewidth]{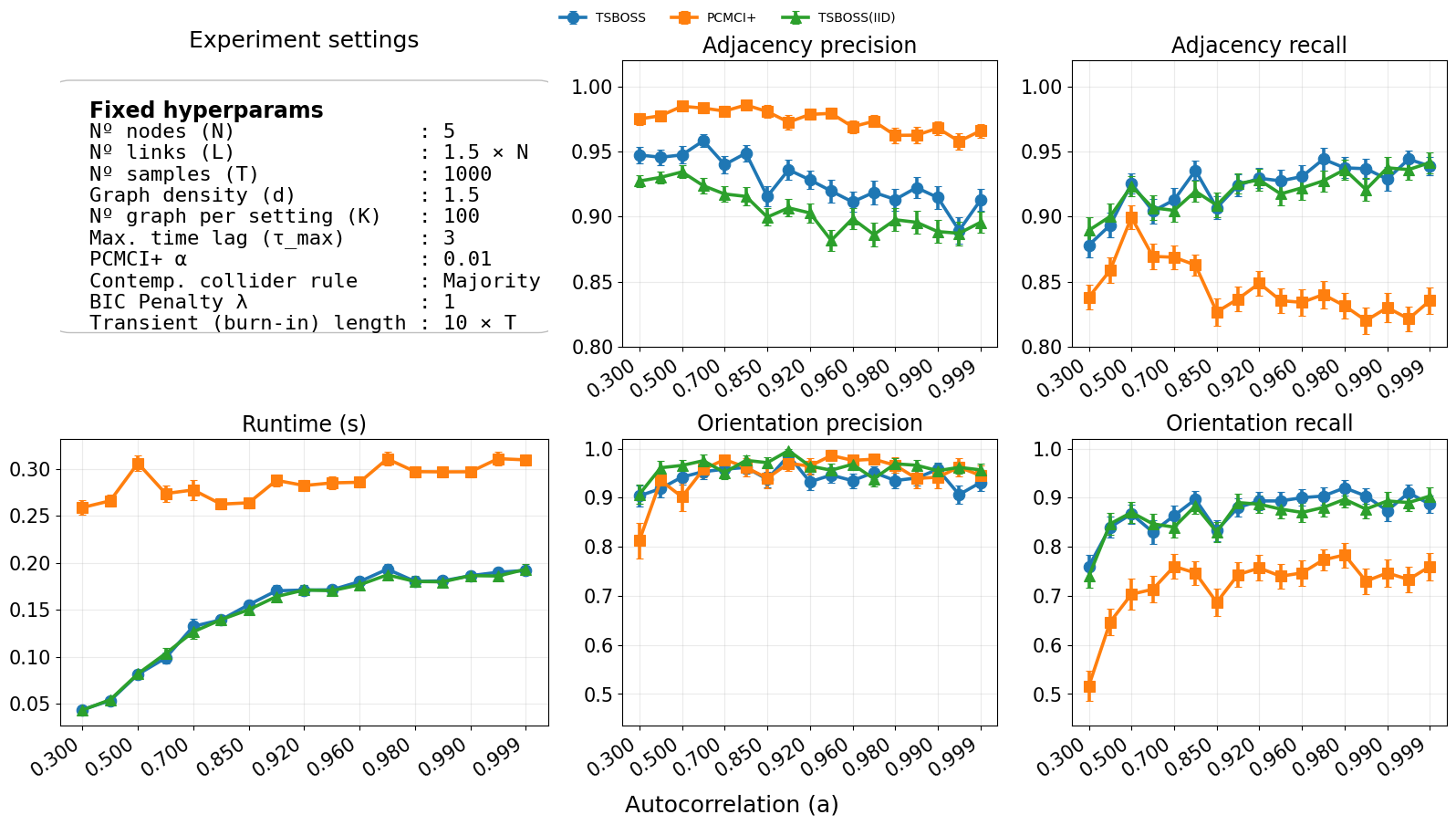}
    \label{fig:exp_auto_dag}}

\caption{Results comparing the true data-generating DAG with the CPDAG returned by TS-BOSS, TS-BOSS (i.i.d.) and PCMCI+ under varying parameter settings.}
\label{fig:results_dag}

\end{figure}

 \end{document}